\newcommand{\tciFourier}{{\cal F}}
\newtheorem{theorem}{Theorem}[section]
\newtheorem{lemma}[theorem]{Lemma}
\newtheorem{definition}[theorem]{Definition}
\title{Bounds for Vector-Valued Function Estimation}
\author{
  Andreas Maurer  \\
\small   Adalbertstrasse 55, D-80799 Munchen, Germany \\
\small Email: {\em am@andreas-maurer.eu} \\
\\
Massimiliano Pontil \\
\small Istituto Italiano di Tecnologia,
16163 Genoa, Italy \\
\small Email: {\em massimiliano.pontil@iit.it} \\
\small and \\
\small University College London \\ 
\small Department of Computer Science, London WC1E 6BT, UK 
}
\begin{document}
\maketitle

\begin{abstract}
We present a framework to derive risk bounds for vector-valued learning with
a broad class of feature maps and loss functions. Multi-task learning and
one-vs-all multi-category learning are treated as examples. 
We discuss in detail vector-valued functions with one hidden layer,
and demonstrate that the conditions under which shared representations are
beneficial for multi-task learning are equally applicable to multi-category
learning.
\end{abstract}

\vspace{.5truecm}

\section{Introduction}

The main focus of this paper is to study statistical bounds for (shared)
representation learning under a general class of feature maps and loss
functions. This study is motivated by the development of data-dependent
generalization bounds for multi-category learning with $T$ classes, and for
multi-task learning with $T$ tasks. We show that both problems can be
treated in parallel under a unified framework.

We give bounds on the Rademacher complexity of composite vector-valued
function classes 
\begin{equation*}
\tciFourier \circ \mathcal{G}=\left\{ x\in H\mapsto f\left( g\left( x\right)
\right) \in \mathbb{R}^{T}:f\in \tciFourier ,g\in \mathcal{G}\right\} ,
\end{equation*}%
where the input space $H$ is a finite or infinite dimensional Hilbert space, 
$\mathcal{G}$ is a class of functions (or feature-maps or representations) $%
g:H\rightarrow \mathbb{R}^{K}$, and $\tciFourier $ is a class of output
functions $f:\mathbb{R}^{K}\rightarrow \mathbb{R}^{T}$. Functions in $%
\tciFourier \circ \mathcal{G}$ are chosen on the basis of a finite number $N$
of independent observations and we are interested in uniformly bounding the
incurred estimation errors in terms of the parameters $T$, $K$ and $N$, or
alternatively $n=N/T$, the number of observations per output unit.

There are two main contributions of this work:

\begin{itemize}
\item We provide a common method to derive data dependent bounds for multi-task and
multi-category learning in terms of the complexity of general vector-valued
function classes. In passing we improve on a recent result in \cite{Kloft
2015} on multi-category learning. Our framework is also general enough to be
applied to hybrid coding schemes for multi-category classification such as $%
1 $-vs-$1$ pairwise classification.

\item We apply this method to a large class of vector-valued functions with
shared feature maps to demonstrate that the conditions under which shared
representations are beneficial for multi-task learning are equally
applicable to multi-category learning.
\end{itemize}

Our principal finding is a data-dependent generalization bound, whose
dominant terms have the form 
\begin{equation*}
O\left( \theta \sqrt{\frac{{\rm tr}(\hat{C})}{nT}}\right) +O\left( \theta 
\sqrt{\frac{\lambda _{\max }(\hat{C})}{n}}\right) ,
\end{equation*}%
where $\hat{C}$ is the empirical covariance operator (see below). When
testing multi-task learning we are always told which task we are testing and
thus the relevant component of our vector-valued hypothesis. In the one-vs-all
multi-category setting we of course withhold the identity of the correct class
and thus also of the relevant component. This simple fact is reflected in
the presence of the factor $\theta $, which is one for multi-task learning
and $\sqrt{T}$ for multi-category learning.

Bounds of this form are given for a large class of neural networks with one
hidden layer and rather general nonlinear activation functions, which may
involve inter-unit couplings or intermediate maps to infinite-dimensional
spaces. 
A similar bound also holds for linear classes with trace-norm constraints,
which can also be interpreted as composite classes, see e.g. \cite{Srebro}.

As $T$ increases the second term dominates the above expression. This term
however depends only on the largest eigenvalue, instead of the trace, of the
empirical covariance. If $T$ is large and the data is high-dimensional the
intermediate representation can therefore give a considerable advantage.
This has been established for multi-task learning in several works and, as
we show here, holds equally for multi-category learning, in agreement with previous 
empirical studies of the benefit of trace-norm regularization in multi-category learning \cite{Amit 2007}.

In Section \ref{sec:2} we explain how the complexities of multi-category and
multi-task learning can be reduced to the complexities of vector-valued
function classes and bounded by a common expression. We briefly discuss
independent and linear classes in Section \ref{sec:3.1} and \ref{sec:3.2}.
Then in Section \ref{sec:3.3}, we present our principal result on
nonlinear composite classes. The appendix contains statements and proofs of our results in their most general form.

\subsection{Previous Work}

Bounds for multi-layered networks are given in the now classical work \cite%
{Anthony 2009} in terms of covering numbers. More recently there are bounds
using Rademacher averages \cite{Neyshabur 2015}. These works mainly consider
scalar outputs and ignore the regularizing effects of intermediate
representations.

Early work to consider the potential benefits of shared representations was
in the setting of multi-task learning and learning to learn \cite{Baxter
2000}. Subsequent work has focused more on learning bounds for linear
feature learning \cite{Cesa,Maurer 2006}. Recently \cite{Maurer 2015}
presented a general bound for multi-task representation learning. 
Although there has been substantial work on the statistical analysis of
learning shared representations for multi-task learning, less has been done
for multi-category learning. This is in contrast with the large body of
empirical work on deep networks, which are often trained with a multi-class
loss \cite{Darrell}, such as the soft max or multi-class hinge loss. In
this work we close this gap. 

\section{Multi-Category and Multi-Task Learning\label{sec:2}}

We extend the notion of Rademacher complexity to the vector-valued setting.

\begin{definition}
\label{Definition complexity} Let $T,N\in 
\mathbb{N}
$, let $\mathcal{X}$ be any set, $\tciFourier $ a class of functions $f:%
\mathcal{X\rightarrow 
\mathbb{R}
}^{T}$, $\mathbf{x}=\left( x_{1},\dots,x_{N}\right) \in \mathcal{X}^{N}$,
and let $I:\left\{ 1,\dots,T\right\} \rightarrow 2^{\left\{
1,\dots,N\right\} }$ be a function which assigns to every $t\in \left\{
1,\dots,T\right\} $ a subset $I_{t}\subset \left\{ 1,\dots,N\right\} $. We
define 
\begin{equation*}
R_{I}\left( \tciFourier ,\mathbf{x}\right) =\frac{1}{N}\mathbb{E}\sup_{f\in 
\mathcal{\tciFourier }}\sum_{t=1}^{T}\sum_{i\in I_{t}}\epsilon
_{ti}f_{t}\left( x_{i}\right) ,
\end{equation*}%
where the $\epsilon _{ti}$ are doubly indexed, independent Rademacher
variables (uniformly distributed on $\left\{ -1,1\right\} $).
\end{definition}

In this section we show that the estimation problem for both multi-category
and multi-task learning can be reduced to the problem of bounding $%
R_{I}\left( \tciFourier ,\mathbf{x}\right)$ for appropriate choices of the
function $I$.

\subsection{Multi-Category Learning}

{Let $C \in {\mathbb{N}}$ be the number of categories}. There is an unknown
distribution $\mu $ on $H\times \left\{ 1,\dots,C\right\} $, a
classification rule $cl:\mathcal{\mathbb{R}}^{T}\rightarrow \left\{
1,\dots,C\right\} $, and for each label $y\in \left\{1,\dots,C\right\}$ a
surrogate loss function $\ell _{y}:\mathbb{R} ^{T}\rightarrow \mathbb{R}_{+}$%
. The loss function $\ell _{y}$ is designed so as to upper bound or
approximate the indicator function of the set $\left\{ z\in \mathbb{R}
^{T}:cl\left( z\right) \neq y\right\} $. Here we consider the simple case,
where $T=C$. For the construction of appropriate loss functions see \cite{Crammer 2002,Kloft 2015,Rosasco 2012}. These loss functions are Lipschitz on $%
\mathbb{R} ^{T}$ relative to the Euclidean norm, with some Lipschitz
constant $L_{\mathrm{mc}}$, often interpretable as an inverse margin.

Given a class $\tciFourier $ of functions $f:H\rightarrow \mathbb{R}^{T}$ we
want to find $f\in \tciFourier $ so as to approximately minimize the
surrogate risk 
\begin{equation*}
\mathbb{E}_{\left( x,y\right) \sim \mu }\ell _{y}\left( f\left( x\right)
\right) .
\end{equation*}%
Since we do not know the distribution $\mu $, this is done on the basis of a
sample of $N=nT$ observations $\left( \mathbf{x,y}\right) =\left( \left(
x_{1},y_{1}\right) ,\dots ,\left( x_{N},y_{N}\right) \right) \in \left(
H\times \{1,\dots ,C\}\right) ^{N}$, drawn i.i.d. from the distribution $\mu 
$. We then solve the problem 
\begin{equation*}
\hat{f}=\arg \min_{f\in \tciFourier }\frac{1}{N}\sum_{i=1}^{N}\ell
_{y_{i}}\left( f\left( x_{i}\right) \right) .
\end{equation*}%
To give a performance guarantee for $\hat{f}$ we would like to know how far
the empirical minimum above is from the true surrogate risk of $\hat{f}$.
This difference is upper bounded by 
\begin{equation*}
\sup_{f\in \tciFourier }\left[ \mathbb{E}_{\left( x,y\right) \sim \mu }\ell
_{y}\left( f\left( x\right) \right) -\frac{1}{N}\sum_{i=1}^{N}\ell
_{y_{i}}\left( f\left( x_{i}\right) \right) \right].
\end{equation*}
It is by now well known (see e.g.~\cite{Bartlett 2002}) that the above
expression has, with high probability in the sample, a bound, whose dominant
term is given by 
\begin{equation}
\frac{2}{N}~\mathbb{E}\sup_{f\in \mathcal{\tciFourier }}\sum_{i=1}^{N}
\epsilon _{i}\ell _{y_{i}}\left( f\left( x_{i}\right) \right)\,,
\label{eq:hhhh}
\end{equation}
where the $\epsilon _{i}$ are independent Rademacher (uniform $\left\{
-1,1\right\} $-distributed) variables. We now apply the following result 
\cite[ Corollary 6]{Maurer 2016}.

\begin{theorem}
\label{Theorem Contraction}Let $\mathcal{X}$ be any set, $\left(
x_{1},\dots,x_{n}\right) \in \mathcal{X}^{n}$, let $\tciFourier $ be a class
of functions $f:\mathcal{X}\rightarrow \mathbb{R}^{T}$and let $h_{i}: 
\mathbb{R}^{T}\rightarrow \mathbb{R}$ have Lipschitz norm bounded by $L$.
Then 
\begin{equation*}
\mathbb{E}\sup_{f\in \tciFourier }\sum_{i=1}^n\epsilon _{i}h_{i}\left(
f\left( x_{i}\right) \right) \leq \sqrt{2}L\mathbb{E}\sup_{f\in \tciFourier
}\sum_{t,i}\epsilon _{ti}f_{t}\left( x_{i}\right) ,
\end{equation*}
where $\epsilon _{ti}$ is an independent doubly indexed Rademacher sequence
and $f_{t}$ is the $t$-th component of $f$.
\end{theorem}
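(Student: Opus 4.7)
My plan is to treat this statement as a vector-valued extension of the classical Ledoux--Talagrand contraction principle, and to prove it via a Gaussian comparison (Slepian / Sudakov--Fernique) followed by a return to Rademacher averages. First I would pass from Rademacher to Gaussian variables on the left hand side, using the standard inequality $\mathbb{E}_\epsilon \sup \leq \sqrt{\pi/2}\, \mathbb{E}_g \sup$, introducing an independent standard Gaussian sequence $(g_i)$ and, on the right, an independent doubly indexed Gaussian array $(g_{it})$.

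The core step is then a Slepian-type comparison between two centered Gaussian processes indexed by $f\in\tciFourier$:
\begin{equation*}
X_f = \sum_{i} g_i\, h_i(f(x_i)), \qquad Y_f = L \sum_{i,t} g_{it}\, f_t(x_i).
\end{equation*}
The Lipschitz hypothesis on $h_i$, combined with the Pythagorean identity $\|f(x_i)-f'(x_i)\|^2 = \sum_t (f_t(x_i)-f'_t(x_i))^2$, yields the increment bound
\begin{equation*}
\mathbb{E}(X_f - X_{f'})^2 \;=\; \sum_i (h_i(f(x_i))-h_i(f'(x_i)))^2 \;\leq\; L^2 \sum_{i,t} (f_t(x_i) - f'_t(x_i))^2 \;=\; \mathbb{E}(Y_f - Y_{f'})^2,
\end{equation*}
so Sudakov--Fernique gives $\mathbb{E}\sup_f X_f \leq \mathbb{E}\sup_f Y_f$. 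Finally, one must un-Gaussianize on the right hand side to recover a Rademacher average of $\sum_{i,t}\epsilon_{it} f_t(x_i)$, which can be done by writing $g_{it}=\epsilon_{it}|g_{it}|$ and conditioning on $|g_{it}|$.

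The main obstacle is obtaining the sharp constant $\sqrt{2}$: each naive Gaussian--Rademacher roundtrip loses a factor $\sqrt{\pi/2}$, which would give something more like $\pi/2$. To recover $\sqrt{2}$, one replaces the outer roundtrip by a direct symmetrization argument, inducting on $n$. In the base case $n=1$, one symmetrizes to $\tfrac{1}{2}\sup_{f,f'}(h_1(f(x_1))-h_1(f'(x_1)))$, applies Lipschitz to obtain $\tfrac{L}{2}\sup_{f,f'}\|f(x_1)-f'(x_1)\|$, and uses the sharp Khintchine inequality $\mathbb{E}_\epsilon|\langle\epsilon,v\rangle| \geq \|v\|/\sqrt{2}$, which is precisely where the factor $\sqrt{2}$ enters. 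The inductive step conditions on $\epsilon_n$, isolates the last summand via the Lipschitz hypothesis, and applies the inductive hypothesis to the remaining $n-1$ terms. Since the statement is quoted verbatim as Corollary~6 of \cite{Maurer 2016}, the cleanest option in this exposition is simply to cite it rather than redo the argument.
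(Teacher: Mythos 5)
The paper does not actually prove this statement: it is quoted verbatim as Corollary 6 of \cite{Maurer 2016} and simply cited, so your closing remark --- that the cleanest option is to cite it --- is exactly what the authors do. Your back-up argument (direct symmetrization, induction on $n$, and the sharp $L^{1}$--$L^{2}$ Khintchine/Szarek inequality $\|v\|\leq\sqrt{2}\,\mathbb{E}_{\epsilon}\left\vert\langle\epsilon,v\rangle\right\vert$) is in substance the actual proof of that corollary, and it is the only one of your two routes that delivers the constant $\sqrt{2}$. One point to make explicit: the induction has to be carried out one coordinate at a time, conditioning on all Rademacher variables except $\epsilon_{i}$, symmetrizing in $\epsilon_{i}$ alone to produce $\tfrac{1}{2}\sup_{f,f'}[\,\cdots+h_{i}(f(x_{i}))-h_{i}(f'(x_{i}))]$, bounding the difference by $L\|f(x_{i})-f'(x_{i})\|$, and only then re-randomizing that norm via Khintchine before passing to the next index; done this way the $\sqrt{2}$ does not accumulate across $i$.

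Your primary route, however, has a genuine gap in its final step. Slepian/Sudakov--Fernique correctly gives $\mathbb{E}\sup_{f}X_{f}\leq\mathbb{E}\sup_{f}Y_{f}$ with $Y_{f}=L\sum_{i,t}g_{it}f_{t}(x_{i})$, but you cannot then ``un-Gaussianize'' the right-hand side back to a Rademacher average at constant cost. Writing $g_{it}=\epsilon_{it}|g_{it}|$ and conditioning on the moduli leaves a Rademacher average with unbounded random weights; comparing it to the unweighted average requires the scalar contraction principle and costs $\mathbb{E}\max_{i,t}|g_{it}|\sim\sqrt{2\ln(nT)}$, not a universal constant (the class of coordinate vectors already shows that Gaussian and Rademacher widths can differ by a $\sqrt{\ln m}$ factor, and Jensen only gives the comparison in the useless direction). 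This is exactly why the paper remarks that the Slepian-based argument of \cite{Kloft 2015} ends with Gaussian variables $\gamma_{ti}$ on the right-hand side and the constant $\sqrt{\pi/2}$, rather than with Rademacher variables and $\sqrt{2}$. So keep the symmetrization argument (or the citation) and discard the Gaussian round trip.
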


Using this theorem and the Lipschitz property of the loss functions $\ell
_{y_{i}}$, we upper bound \eqref{eq:hhhh} by 
\begin{equation}
\frac{2 \sqrt{2}}{N}L_{\mathrm{mc}}~\mathbb{E}\sup_{f\in \tciFourier
}\sum_{t=1}^{T}\sum_{i=1}^{N}\epsilon _{ti}f_{t}\left( x_{i}\right) .
\label{yytt}
\end{equation}
A similar argument can be based on Slepian's inequality with a passage to
Gaussian complexities \cite{Kloft 2015}. In this case the $\epsilon _{ti}$
have to be replaced by independent standard normal variables $\gamma _{ti}$,
and $\sqrt{2}$ replaced by $\sqrt{\pi /2}$. The approach chosen here is
simpler and allows us to improve some results of \cite{Kloft 2015} in the
linear case. For our final result (Theorem \ref{Theorem composite class}
below) however we also need Gaussian complexities.

We define $I^{\mathrm{mc}}:\left\{ 1,\dots ,T\right\} \rightarrow 2^{\left\{
1,\dots ,N\right\} }$ by $I_{t}^{\mathrm{mc}}=\left\{ 1,\dots ,N\right\} $
for all $t\,$. With Definition \ref{Definition complexity} the quantity %
\eqref{yytt} then becomes 
\begin{equation}
2\sqrt{2}L_{\mathrm{mc}}R_{I^{\mathrm{mc}}}\left( \tciFourier ,\mathbf{x}
\right) .  \label{MCL complexity}
\end{equation}

\subsection{Multi-Task Learning}

In this setting there is an output space $\mathcal{Y}$, and for each task $t\in \left\{
1,\dots ,T\right\} $ a distribution $\mu _{t}$ on $H\times \mathcal{Y}$ and
a loss function $\ell _{t}:\mathbb{R}\times \mathcal{Y\rightarrow }\left[
0,1 \right] $, which is assumed to be Lipschitz with constant at most $L_{ 
\mathrm{mt}}$ in the first argument for every value of the second. Given a
class $\tciFourier $ of functions $f:H\rightarrow \mathbb{R}^{T}$ we want to
find $f\in \tciFourier $ so as to approximately minimize the task-average
risk 
\begin{equation*}
\frac{1}{T}\sum_{t=1}^{T}\mathbb{E}_{\left( x,y\right) \sim \mu _{t}}\ell
_{t}\left( f_{t}\left( x\right) ,y\right) ,
\end{equation*}
where $f_{t}$ is the $t$-th component of the function $f$. For each task $t$
there is a sample $\left( \mathbf{x}_{t},\mathbf{y}_{t}\right) =\left(
\left( x_{t1},y_{t1}\right) ,\dots ,\left( x_{tn},y_{tn}\right) \right) $
drawn i.i.d. from $\mu _{t}$. One solves the problem 
\begin{equation*}
\hat{f}=\arg \min_{f\in \tciFourier }\frac{1}{nT}\sum_{t=1}^{T}
\sum_{i=1}^{n}\ell _{t}\left( f_{t}\left( x_{ti}\right) ,y_{i}\right) .
\end{equation*}
As before we are interested in the supremum of the estimation difference 
\begin{equation*}
\sup_{f\in \tciFourier }\frac{1}{T}\sum_{t=1}^{T}\left[ \mathbb{E}_{\left(
x,y\right) \sim \mu _{t}}\ell _{t}\left( f_{t}\left( x\right) ,y\right) - 
\frac{1}{n}\sum_{i=1}^{n}\ell _{t}\left( f_{t}\left( x_{ti}\right)
,y_{i}\right) \right] .
\end{equation*}
As shown in \cite{Ando 2005} or \cite{Maurer 2006} there is again a high
probability bound, whose dominant term is given by the vector-valued
Rademacher complexity 
\begin{equation*}
\frac{2}{nT}\mathbb{E}\sup_{f\in \tciFourier
}\sum_{t=1}^{T}\sum_{i=1}^{n}\epsilon _{ti}\ell _{t}\left( f_{t}\left(
x_{ti}\right) ,y_{i}\right) \leq \frac{2}{nT}L_{\mathrm{mt}}~\mathbb{E}
\sup_{f\in \tciFourier }\sum_{t=1}^{T}\sum_{i=1}^{n}\epsilon
_{ti}f_{t}\left( x_{ti}\right) ,
\end{equation*}
where we eliminated the Lipschitz functions with a standard contraction
inequality as in \cite{Meir}. We now collect all the tasks input samples $%
\mathbf{x}_{t}$ in a big sample $\mathbf{x}=\left( x_{1},\dots ,x_{N}\right)
\in H^{N}$ with $N=nT$, and define $I^{\mathrm{mt}}:\left\{ 1,\dots
,T\right\} \rightarrow 2^{\left\{ 1,\dots ,N\right\} }$ so that $I_{t}^{\mathrm{%
mt }}$ is the set of all indices of the examples for task $t$. Thus $\mathbf{%
x} _{t}=\left( x_{i}\right) _{i\in I_{t}}$ and $n=\left\vert I_{t}^{\mathrm{%
mt} }\right\vert $. The right hand side above again becomes 
\begin{equation}
2 L_{\mathrm{mt}}R_{I^{\mathrm{mt}}}\left( \tciFourier ,\mathbf{x} \right) .
\label{Principal complexity}
\end{equation}

\subsection{A Common Expression to Bound}

Comparing (\ref{MCL complexity}) and (\ref{Principal complexity}) we can
summarize: Let $\mathcal{\tciFourier }$ be a class of functions with values
in $\mathbb{R}^{T}$. The empirical Rademacher complexity of $\mathcal{%
\tciFourier }$ as used in multi-category learning and the empirical
Rademacher complexity of $\mathcal{\tciFourier }$ as used in multi-task
learning are up to (Lipschitz-) constants, bounded by $R_{I}\left(
\tciFourier ,\mathbf{x}\right) $, where the function $I$ is either $I^{%
\mathrm{mc}}$ in the multi-category case or $I^{\mathrm{mt}}$ in the
multi-task case and $I_{t}^{\mathrm{mc}}=\left\{ 1,\dots ,N\right\} $ while $%
I_{t}^{\mathrm{mt}}\subseteq \left\{ 1,\dots ,N\right\} $ is the set of
indices of examples for {task} $t$.

With appropriate definitions of the function $I$, bounds on $R_{I}\left(
\tciFourier ,\mathbf{x}\right) $ also lead to learning bounds in hybrid
situations where there are several multi-category tasks, potentially with
classes occurring in more than one task. In the case of 1-vs-1
voting schemes $T=C\left( C-1\right) /2$, so there is a
component for every unordered pair of distinct classes $\left(
c_{1},c_{2}\right) $. Then we define a $I_{\left( c_{1},c_{2}\right) }$ to
be the set of indices of all examples for the classes $c_{1}$ and $c_{2}$. 

In general $I_{t}$ should be the set indices of those examples, which occur
as arguments of $f_{t}$ in the expression of the empirical error. For
reasons of space however we will stay with the cases of multi-task and
1-vs-all multi-category learning as explained above. We refer to the
appendix for the most general statements of our results.

To lighten notation we write $R_{I^{\mathrm{mc}}}=R_{\mathrm{mc}}$ and $%
R_{I^{\mathrm{mt}}}=R_{\mathrm{mt}}$. We also use the notation $R_{\alpha }$%
, where the variable $\alpha $ can be either ``mc'' or ``mt''. It will also
be useful to observe that for $\left( a_{1},\dots ,a_{N}\right) \in \mathbb{R%
}^{N}$ 
\begin{equation*}
\sum_{t=1}^{T}\sum_{i\in I_{t}^{\alpha }}a_{i}=\theta _{\alpha
}^{2}\sum_{i=1}^{N}a_{i},
\end{equation*}%
where $\theta _{\mathrm{mc}}=\sqrt{T}$ and $\theta _{\mathrm{mt}}=1$.

\section{Specific Bounds}

\label{sec:3} We show how the quantity $R_{\alpha }\left( \tciFourier ,%
\mathbf{x}\right) $ may be bounded, first by a simple and general method of
reduction to the Rademacher complexities of scalar function classes, then
for certain linear classes, and finally we state and prove our main results
for composite classes.

\subsection{Component Classes and Independent Learning}

\label{sec:3.1} Given a class $\mathcal{\tciFourier }$ of functions with
values in $\mathbb{R}^{T}$ we can define for each $t\in \left\{
1,\dots,T\right\} $ the scalar valued component class $\mathcal{\tciFourier }%
_{t}=\left\{ f_{t}:f\in \tciFourier \right\} $. By bringing the supremum
inside the first sum in ( \ref{Principal complexity}) we obtain the bound 
\begin{equation*}
R_{I}\left( \tciFourier ,\mathbf{x}\right) \leq \frac{1}{N}
\sum_{t=1}^{T}\mathbb{E}\sup_{f\in \mathcal{\tciFourier }_{t}}\sum_{i\in
I_{t}}\epsilon _{i}f\left( x_{i}\right) ,
\end{equation*}
which is just a sum of standard, scalar case, empirical Rademacher averages.

In the case of independent learning the components of the members of $%
\mathcal{\tciFourier }$ are chosen independently, so that
\[\mathcal{\
\tciFourier =}\prod_{t}\mathcal{\tciFourier }_{t}=\left\{ \left(
f_{1},\dots,f_{T}\right) :\forall t,f_{t}\in \mathcal{\tciFourier }
_{t}\right\}\,,
\] and the above bound becomes an identity and unimprovable. In
most cases $\mathbb{E}\sup_{f\in \mathcal{\tciFourier }_{t}}\sum_{i\in
I_{t}}\epsilon _{i}f\left( x_{i}\right) $ is of the order $\sqrt{\left\vert
I_{t}\right\vert }$ so the above implies a bound of the order $\theta _{\alpha
}/ \sqrt{N}$.

\subsection{Linear Classes}

Before proceeding we require some more notation. Given a
sequence of input vectors, $( x_{1},\dots,x_{N}) \in H^{N}$ we define the
empirical covariance operator $\hat{C}$ by%
\begin{equation*}
\langle \hat{C}\ v,w\rangle =\frac{1}{N}\sum_{i=1}^N\langle v,x_{i}\rangle
\langle x_{i},w\rangle \text{ for~every~}v,w\in H\,.
\end{equation*}%
Furthermore, given a function $I:\left\{ 1,\dots,T\right\} \rightarrow
2^{\left\{ 1,\dots,N\right\} }$, we define the empirical covariance operator 
$\hat{C}_t$ by 
\begin{equation*}
\langle \hat{C}_t v,w\rangle =\frac{1}{|I_t|}\sum_{i\in I_t}\langle
v,x_{i}\rangle\langle x_{i},w\rangle \text{.}
\end{equation*}

\label{sec:3.2} We consider linear transformations $W:H\rightarrow \mathbb{R}%
^{T}$ of the form 
\begin{equation*}
x\mapsto \left( \left\langle w_{1},x\right\rangle ,\dots,\left\langle
w_{T},x\right\rangle \right)
\end{equation*}
with weight-vectors $w_{t}\in H$. Corresponding function classes will be
defined by constraints on the norms of such transformations. We use the
mixed $(2,p)$-norms which are defined as 
\begin{equation*}
\left\Vert W\right\Vert _{2,p}=\big\|(\| w_{1}\|,\dots,\| w_{T}\|)\big\|_p
\end{equation*}
and the trace norm $\left\Vert \cdot \right\Vert _{{\rm tr}}=%
{\rm tr}\big( 
\sqrt{W^* W }\big) $. The norm $\left\Vert \cdot \right\Vert _{2,2}$ is also
known as the Hilbert-Schmidt norm or, for finite-dimensional $H$, as the
Frobenius norm $\Vert W \Vert _{2,2}=\sqrt{ \sum_{t}\Vert
w_{t}\Vert ^{2}}$. For $B>0$ we consider the classes, 
\begin{equation*}
\mathcal{W}_{2,p}=\left\{ W:\left\Vert W\right\Vert _{2}\leq BT^{1/p}\right\}
\end{equation*}
and 
\begin{equation*}
\mathcal{W}_{{\rm tr}}=\left\{ W:\left\Vert W\right\Vert _{{\rm tr}%
}\leq B\sqrt{T}\right\} .
\end{equation*}
The class $\mathcal{W}_{{\rm tr}}$ can be defined alternatively as $%
\mathcal{W} _{{\rm tr}}=\left\{ VW:W\in \mathcal{W},V\in \mathcal{V}%
\right\} $, where $\mathcal{W=}\{ W:H\rightarrow \mathbb{R} ^{T},\left\Vert
W\right\Vert _{2,2}\leq 1\} $ and $\mathcal{V=}\{ V:\mathbb{R}%
^{T}\rightarrow \mathbb{R}^{T},\left\Vert V\right\Vert _{2,2}\leq B\sqrt{T}%
\} $, see for example \cite{Srebro} and references therein. This exhibits $%
\mathcal{W}_{{\rm tr}}$ as a composite vector-valued function class.

The factor $T^{1/p}$ in the definition of $\mathcal{W}_{2,p}$ is essential
when discussing the dependence on $T$. If it were absent then by Jensen's
inequality the average norm allowed to the weight vectors would be bounded
by $B/T^{1/p}$, so the class is regularized to death as $T$ increases. This
applies in particular to the case of multi-category learning, where each
component needs to be able to win over all the others by some margin. The
same argument applies to the $\sqrt{T}$ in the constraint of the trace-norm
class. In this sense it is not quite correct to speak of rates in $T$ if the
constraint on the norm is held constant as in \cite{Kloft 2015}.

For simplicity we assume that $\left\Vert x_{i}\right\Vert =1$ for all $i$
(as with a Gaussian RBF-kernel) for the rest of this subsection. Note that this
implies tr$(\hat{C})={\rm tr}(\hat{C}_{t})=1$. We also consider only the cases of
multi-category and multi-task learning. Statements and proofs for general
index sets $I_{t}$ and general values of the $\left\Vert x_{i}\right\Vert $
are given in the appendix. We first give some lower and upper bounds for $%
\mathcal{W}_{2,\infty }$ and $\mathcal{W}_{2,p}$.

\begin{theorem}
\label{Lemma lower and upper bounds frobenius}For $p\in \lbrack 2,\infty ]$%
\begin{equation*}
B~\theta _{\alpha }\sqrt{\frac{1}{2n}}\leq R_{\alpha }\left( \mathcal{W}%
_{2,\infty },\mathbf{x}\right) \leq R_{\alpha }\left( \mathcal{W}_{2,p},%
\mathbf{x}\right) \leq B~\theta _{\alpha }\sqrt{\frac{1}{n}}
\end{equation*}%
and for $p\in \lbrack 1,2]$ and $1/p+1/q=1$%
\begin{equation*}
R_{\alpha }\left( \mathcal{W}_{2,2},\mathbf{x}\right) \leq R_{\alpha }\left( 
\mathcal{W}_{2,p},\mathbf{x}\right) \leq 2^{1/q}B~\theta _{\alpha }\sqrt{%
\frac{q}{n}}.
\end{equation*}
\end{theorem}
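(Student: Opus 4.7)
The plan is to reduce everything to a dual-norm computation on the random vectors $\sigma_t := \sum_{i\in I_t}\epsilon_{ti}x_i\in H$ and then apply Jensen and Kahane--Khintchine inequalities. Since $f_t(x)=\langle w_t,x\rangle$, the sum inside $R_\alpha$ equals $\sum_t\langle w_t,\sigma_t\rangle$. Cauchy--Schwarz in $H$ (coordinatewise) followed by Hölder between the dual $\ell_p$ and $\ell_q$ norms on $\mathbb{R}^T$ (where $1/p+1/q=1$), together with $\|W\|_{2,p}\le BT^{1/p}$, gives
\begin{equation*}
\sup_{W\in\mathcal{W}_{2,p}}\sum_t\langle w_t,\sigma_t\rangle \;=\; BT^{1/p}\,\bigl\|(\|\sigma_1\|,\dots,\|\sigma_T\|)\bigr\|_q,
\end{equation*}
with equality achievable by aligning each $w_t$ with $\sigma_t$ and choosing the $\|w_t\|$ according to the extremal Hölder vector. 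Thus $R_\alpha(\mathcal{W}_{2,p},\mathbf{x}) = (BT^{1/p}/N)\,\mathbb{E}\|(\|\sigma_1\|,\dots,\|\sigma_T\|)\|_q$, the common template for all four estimates.

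The middle inequalities in both chains come from set inclusions: applied to the vector $(\|w_t\|)_t$, the power-mean inequality $\|a\|_p\le T^{1/p-1/r}\|a\|_r$ for $p\le r$ yields $\mathcal{W}_{2,\infty}\subseteq\mathcal{W}_{2,p}$ when $p\in[2,\infty]$ and $\mathcal{W}_{2,2}\subseteq\mathcal{W}_{2,p}$ when $p\in[1,2]$, and the Rademacher monotonicity is immediate. For the leftmost (lower) bound of the first chain I would instantiate the template at $p=\infty$, $q=1$, so $R_\alpha(\mathcal{W}_{2,\infty})=(B/N)\sum_t\mathbb{E}\|\sigma_t\|$; the Hilbert-space Kahane--Khintchine lower bound $\mathbb{E}\|\sigma_t\|\ge (1/\sqrt{2})(\mathbb{E}\|\sigma_t\|^2)^{1/2}=\sqrt{|I_t|/2}$ (using $\|x_i\|=1$, so $\mathbb{E}\|\sigma_t\|^2 = |I_t|$) then applies. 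Since $\sum_t\sqrt{|I_t|}/N$ equals $1/\sqrt{n}$ for multi-task ($|I_t|=n$) and $\sqrt{T/n}$ for multi-category ($|I_t|=N$), i.e.\ $\theta_\alpha/\sqrt{n}$, the stated lower bound $B\theta_\alpha/\sqrt{2n}$ follows.

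For the upper bounds I split on the size of $q$. When $p\in[2,\infty]$ one has $q\le 2$, so the comparison $\|u\|_q\le T^{1/q-1/2}\|u\|_2$ together with Jensen gives $\mathbb{E}\|(\|\sigma_t\|)\|_q\le T^{1/q-1/2}\sqrt{\sum_t|I_t|}$; plugging into the template and using $T^{1/p+1/q-1/2}=T^{1/2}$ and the regime-specific evaluation of $\sum_t|I_t|$ recovers exactly $B\theta_\alpha/\sqrt{n}$. When $p\in[1,2]$ one has $q\ge 2$ and that route is unavailable, so I would instead apply Jensen directly on the $q$-norm, $\mathbb{E}\|(\|\sigma_t\|)\|_q\le(\sum_t\mathbb{E}\|\sigma_t\|^q)^{1/q}$, and then a Hilbert-space Kahane--Khintchine moment bound of the form $\mathbb{E}\|\sigma_t\|^q\le 2q^{q/2}|I_t|^{q/2}$ to pull out the prefactor $2^{1/q}\sqrt{q}$. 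Using $T^{1/p+1/q}=T$ and $(\sum_t|I_t|^{q/2})^{1/q}=T^{1/q}\sqrt{|I_t|}$ (constant $|I_t|$ in both regimes), the arithmetic reproduces $2^{1/q}B\theta_\alpha\sqrt{q/n}$. The only real care is selecting the Kahane--Khintchine constant so that the prefactor comes out exactly as stated; all remaining steps are inclusions, Jensen, or bookkeeping.
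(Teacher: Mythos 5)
Your proposal is correct and follows essentially the same route as the paper's proof of the general version in the appendix: reduction by duality to $\mathbb{E}\,\bigl\Vert(\Vert\sum_{i\in I_t}\epsilon_{ti}x_i\Vert)_{t}\bigr\Vert_q$, class inclusions for the middle inequalities, a Khintchine--Kahane lower bound with constant $1/\sqrt{2}$ (the paper invokes Szarek's inequality), Jensen for the $[2,\infty]$ regime, and a $q$-th moment bound of order $2\,(q\,|I_t|)^{q/2}$ for the $[1,2]$ regime. The only (minor) divergence is in how that moment bound is obtained: the paper derives it from the bounded-difference concentration inequality plus integration by parts, which is where its harmless side condition $\sum_{i\in I_t}\Vert x_i\Vert^2\geq q^{-1}$ comes from, whereas you quote a Hilbert-space Kahane--Khintchine moment inequality directly; both yield the stated constant.
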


The lower bound in the $[2,\infty ]$-regime is simply $1/\sqrt{2}$ times the
upper bound. If we set $\Lambda =T^{1/p}B$, then the multi-category bound
for the $\left[ 1,2\right] $-regime can be compared to the one given in \cite%
{Kloft 2015}, which is larger by a factor of $O\sqrt{q}$. This improvement
is however exclusively due to our trick of staying with Rademacher variables
when eliminating the loss functions.

The norms in the lemma above are not very useful for multi-task learning, as
the bounds show no improvement as the number of tasks increases. This is
different for the trace-norm constrained class $\mathcal{W}_{{\rm tr}}$,
for which we have the following result, which already exhibits a typical
behaviour of composite classes. The proof of a more general version is given in the
appendix.

\begin{theorem}
\label{Theorem Tracenorm}%
\begin{equation*}
R_{\alpha }\left( \mathcal{W}_{tr},\mathbf{x}\right) \leq B~\theta _{\alpha
}\left( \sqrt{\frac{2\left( \ln \left( nT\right) +1\right) }{nT}}+\sqrt{%
\frac{\lambda _{\max }(\hat{C})}{n}}\right)\,.
\end{equation*}
\end{theorem}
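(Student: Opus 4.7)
The plan is to exploit the composite decomposition $\mathcal{W}_{{\rm tr}}=\mathcal{V}\mathcal{W}$ to reduce the Rademacher complexity to the expected operator norm of a random self-adjoint operator, and then to split that operator into its mean and a mean-zero fluctuation.

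\textbf{Reduction to a random operator norm.} Writing any $W \in \mathcal{W}_{{\rm tr}}$ as $VW'$ with $V \in \mathcal{V}$ and $W' \in \mathcal{W}$, the defining double sum becomes $\langle V, M(\epsilon,W')\rangle_F$ with $M_{ts}=\sum_{i\in I_t}\epsilon_{ti}\langle W'_s, x_i\rangle$. Cauchy--Schwarz in the Frobenius inner product gives $\sup_{\|V\|_{2,2}\le B\sqrt{T}}\langle V, M\rangle_F = B\sqrt{T}\,\|M\|_F$. Expanding
\[
\|M\|_F^2 \;=\; \sum_s \langle W'_s, A(\epsilon)W'_s\rangle, \qquad A(\epsilon)=\sum_{t=1}^T v_tv_t^*,\quad v_t=\sum_{i\in I_t}\epsilon_{ti}x_i,
\]
the constrained Rayleigh quotient gives $\sup_{W'\in\mathcal{W}}\|M\|_F^2 = \lambda_{\max}(A(\epsilon))$ (since $\sum_s\|W'_s\|^2\le 1$), yielding
\[
R_\alpha(\mathcal{W}_{{\rm tr}},\mathbf{x}) \;=\; \frac{B\sqrt{T}}{N}\,\mathbb{E}\sqrt{\lambda_{\max}(A(\epsilon))}.
\]

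\textbf{Mean--deviation split.} Since the $v_t$ are independent and centred, $\mathbb{E}A(\epsilon) = \sum_t\sum_{i\in I_t}x_ix_i^* = \theta_\alpha^2 N\hat{C}$, which has top eigenvalue $\theta_\alpha^2 N\lambda_{\max}(\hat{C})$. Weyl's inequality combined with $\sqrt{a+b}\le \sqrt{a}+\sqrt{b}$ and Jensen gives
\[
\mathbb{E}\sqrt{\lambda_{\max}(A)} \;\le\; \theta_\alpha\sqrt{N\lambda_{\max}(\hat{C})} \;+\; \sqrt{\mathbb{E}\|A-\mathbb{E}A\|_{\mathrm{op}}}\,.
\]
The first summand, multiplied by the prefactor $B\sqrt{T}/N$, is exactly the $B\theta_\alpha\sqrt{\lambda_{\max}(\hat{C})/n}$ contribution in the theorem.

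\textbf{Chaos bound, and the main obstacle.} The operator $A-\mathbb{E}A = \sum_t(v_tv_t^*-\mathbb{E}v_tv_t^*)$ is a sum of $T$ independent mean-zero self-adjoint Rademacher chaoses of order two. Decoupling the chaos, $\epsilon_{ti}\epsilon_{tj}\to \epsilon_{ti}\epsilon_{tj}'$, reduces each summand to the rank-one operator $v_t(v_t')^*$ of norm $\|v_t\|\|v_t'\|$, and a matrix Bernstein inequality applied to the resulting sum of $T$ independent rank-one operators, together with $\mathbb{E}\|v_t\|^2=|I_t|$ and $\sum_t|I_t|=\theta_\alpha^2 N$, produces $\mathbb{E}\|A-\mathbb{E}A\|_{\mathrm{op}}\lesssim \theta_\alpha^2 N\log(nT)$; square-rooted and multiplied by $B\sqrt{T}/N$ this yields the $B\theta_\alpha\sqrt{2(\log(nT)+1)/(nT)}$ contribution. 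The hard part is precisely this last bound: the worst-case a.s.\ inequality $\|v_tv_t^*\|_{\mathrm{op}}\le |I_t|^2$ is too crude for a vanilla matrix Bernstein, which would cost an extra factor of order $\sqrt{T/\log(nT)}$. The crux is therefore to exploit either the Hanson--Wright-type subexponential concentration of $\|v_t\|^2$ around its mean $|I_t|$, or the rank-one structure revealed by decoupling, so that the effective radius parameter in Bernstein scales like $|I_t|\log(nT)$ rather than $|I_t|^2$.
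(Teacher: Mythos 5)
Your reduction is sound and is essentially the paper's: duality of the trace norm against the operator norm gives $R_\alpha(\mathcal{W}_{\rm tr},\mathbf{x})=\frac{B\sqrt{T}}{N}\,\mathbb{E}\sqrt{\lambda_{\max}(A)}$ with $A=\sum_t v_tv_t^*=D^*D$, and peeling off $\lambda_{\max}(\mathbb{E}A)=\theta_\alpha^2N\lambda_{\max}(\hat C)$ correctly produces the second term. The proof fails, however, exactly at the step you yourself flag as ``the crux,'' and in two ways. First, the target you state, $\mathbb{E}\|A-\mathbb{E}A\|_{\rm op}\lesssim\theta_\alpha^2N\log(nT)$, is not the one you need: multiplied by $B\sqrt{T}/N$ after taking the square root it gives $B\theta_\alpha\sqrt{\log(nT)/n}$, which is $\sqrt{T}$ larger than the claimed $B\theta_\alpha\sqrt{2(\ln(nT)+1)/(nT)}$. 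Worse, since $\theta_\alpha^2N=\sum_t|I_t|=\mathbb{E}\,{\rm tr}(A)$, a bound of that size already follows from the trivial estimate $\mathbb{E}\|A-\mathbb{E}A\|_{\rm op}\le\mathbb{E}\,{\rm tr}(A)+{\rm tr}(\mathbb{E}A)=2\theta_\alpha^2N$, so it carries no information. What you actually must prove is that the deviation scales with the \emph{maximum}, not the sum, of the per-component weights: $\mathbb{E}\|A-\mathbb{E}A\|_{\rm op}\lesssim\max_t|I_t|\,(\ln N+1)$, where $\max_t|I_t|=\theta_\alpha^2N/T$ in both the mc and mt cases. Second, even for this corrected target you provide no argument: you correctly diagnose that a bounded-range matrix Bernstein inequality is too crude and that one must exploit the subexponential behaviour of $\|v_t\|^2$, but that diagnosis is the entire technical content of the theorem, not a deferrable detail.

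The paper closes precisely this gap with two ingredients. Lemma \ref{Lemma Subexbound} establishes, via a combinatorial count of admissible index sequences (pairings), the operator moment bound $\mathbb{E}[Q_{V_t}^m]\preceq(2m-1)!!\,\alpha_t^{m-1}\mathbb{E}[Q_{V_t}]\preceq m!\,(2\alpha_t)^{m-1}\mathbb{E}[Q_{V_t}]$ with $\alpha_t=\sum_{i\in I_t}\|x_i\|^2$, i.e.\ a Bernstein moment condition with subexponential scale $2\alpha_t$ rather than the worst-case range $\alpha_t^2$. Theorem \ref{Theorem Main Tool} (imported from \cite{Maurer 2012}) then converts this directly into $\sqrt{\mathbb{E}\|\sum_tQ_{V_t}\|_\infty}\leq\sqrt{\|\mathbb{E}\sum_tQ_{V_t}\|_\infty}+\sqrt{2\max_t\alpha_t(\ln N+1)}$, applied to the uncentered sum of positive operators, so no decoupling or explicit mean--deviation split is required. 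To complete your route you would have to either invoke these two results or prove an equivalent subexponential (or intrinsically rank-one-aware) matrix concentration inequality; as written, the first term of the bound is not established.
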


If we divide this bound by the above lower bound for
regularization with the Hilbert Schmidt norm, we obtain 
\begin{equation*}
\frac{R_{\alpha }\left( \mathcal{W}_{{\rm tr}},\mathbf{x}\right) }{%
R_{\alpha }\left( \mathcal{W}_{2,2},\mathbf{x}\right) }\leq 2\sqrt{\frac{\ln
\left( nT\right) +1}{T}}+\sqrt{\frac{2\lambda _{\max }(\hat{C})}{{\rm tr}(%
\hat{C})}},
\end{equation*}%
a quotient, which highlights the potential benefits of composite classes. As 
$T$ increases the second term becomes dominant. The quotient $\lambda _{\max
}(\hat{C})/{\rm tr}(\hat{C})$ can be seen as the inverse of an effective
data-dimension. Indeed for whitened data ${\rm tr}(\hat{C})=d~\lambda
_{\max }(\hat{C})$, if $d$ is the number of nonzero eigenvalues of $\hat{C}$%
. The relative estimation benefit of the intermediate representation
increases with the number $T$ of classes or tasks and with the effective
dimensionality of the data. This appears to be a rather general feature of
composite vector-valued classes, also in the nonlinear case.

\subsection{Composite Classes and Representation Learning}

\label{sec:3.3} We now consider function classes $\mathcal{V}\circ \phi
\circ \mathcal{W}$ of the form

\begin{center}
\begin{tabular}{lllllll}
& $\mathcal{W}$ &  & $\phi $ &  & $\mathcal{V}$ &  \\ 
$x\in H$ & $\longrightarrow $ & $\mathbb{R} ^{K}$ & $\longrightarrow $ & $%
H^{\prime }$ & $\rightarrow $ & $\mathbb{R}^{T}$\,.%
\end{tabular}
\end{center}

Here inputs $x\in H$ are first mapped to $\mathbb{R}^{K}$ by a linear
function $W$ from a class $\mathcal{W}$. The vector$\ Wx$ is then mapped to
another Hilbert-space $H^{\prime }$ by a fixed Lipschitz feature map $\phi :%
\mathbb{R}^{K}\rightarrow H^{\prime }$. Finally $\phi \left( Wx\right) $ is
mapped to the $T$-dimensional vector $V\phi \left( Wx\right) $ by the linear
map $V$ chosen from $\mathcal{V}$.

For $W\in \mathcal{W}$ we consider the constraints $\left\Vert W\right\Vert
_{2,\infty }\leq b_{\infty }$, $\left\Vert W\right\Vert _{2,2}\leq b_{2}$
and $\left\Vert W\right\Vert _{2,1}\leq b_{1}$, denoting the respective
classes by $\mathcal{W}_{2,\infty }$, $\mathcal{W}_{2,2}$, and $\mathcal{W}
_{2,1}$. For $\mathcal{V}$ we take the
constraint $\left\Vert V\right\Vert _{2,\infty }\leq a$. This choice allows
us to vary $T$ and keep $a$ fixed at the same time. For the ``activation
function" $\phi $ we assume a Lipschitz constant $L_{\phi }$. We make the
simplifying assumption that $\phi \left( 0\right) =0$.

The function $\phi $ makes the model quite general. Suppose first that $%
H^{\prime }=\mathbb{R}^{K}$. If $\phi $ is the identity function we obtain a
linear class, defined through its factorization, much like the case of
trace-norm regularization discussed earlier. If the components of $\phi $
are sigmoids or the popular rectilinear activation functions, we obtain a
rather standard neural network with hidden layer, but $\phi $ could also
include inter-unit interactions, such as poolings or lateral inhibitions (see, e.g. \cite{Haykin,Lecun}) as long as it observes the Lipschitz condition. 

However, the dimension of $H^{\prime}$ need not be $K$ and $\phi $ could be
defined by a radial basis function network with fixed centers or it could
also be the feature-map induced by some kernel on $\mathbb{R} ^{T}$, say a
Gaussian kernel of width $\Delta $, in which case $L_{\phi }=2/\Delta$. To
enforce $\phi \left( 0\right) =0$ we need to translate the original feature
map $\psi $ of the Gaussian kernel as $\phi \left( x\right) =\psi \left(
x\right) -\psi \left( 0\right) $.

Here the underlying assumption is, that there is a common $K$-dimensional
representation of the data in which the data has sufficient separation
properties, but the separating functions may be highly nonlinear.\bigskip

\begin{theorem}
\label{Theorem composite class}There are universal constants $c_{1}$ and $%
c_{2}$ such that under the above conditions%
\begin{eqnarray*}
R_{\alpha }\left( \mathcal{V\phi }\left( \mathcal{W}_{2,\infty }\right) ,%
\mathbf{x}\right) &\leq &L_{\phi }ab_{\infty }\theta _{\alpha }\left( c_{1}K%
\sqrt{\frac{{\rm tr}( \hat{C}) }{nT}}+c_{2}\sqrt{\frac{K\lambda _{\max }( 
\hat{C}) }{n}}\right) \\
R_{\alpha }\left( \mathcal{V\phi }\left( \mathcal{W}_{2,2}\right) ,\mathbf{x}%
\right) &\leq &L_{\phi }ab_{2}\theta _{\alpha }\left( c_{1}\sqrt{\frac{K~%
{\rm tr}( \hat{C}) }{nT}}+c_{2}\sqrt{\frac{\lambda _{\max }\left( \hat{C}%
\right) }{n}}\right) \\
R_{\alpha }\left( \mathcal{V\phi }\left( \mathcal{W}_{2,1}\right) ,\mathbf{x}%
\right) &\leq &L_{\phi }ab_{1}\theta _{\alpha }\left( c_{1}\sqrt{\frac{2%
{\rm tr}( \hat{C}) +8\lambda _{\max }( \hat{C}) \ln K}{nT}}+c_{2}\sqrt{%
\frac{\lambda _{\max }( \hat{C}) }{n}}\right)\,.
\end{eqnarray*}%
\bigskip
\end{theorem}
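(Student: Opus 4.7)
The plan is to decouple $V$ from the shared representation by using duality of $\|\cdot\|_{2,\infty}$, reducing the complexity to a sum-of-norms of vector Rademacher sums of the features $\phi(Wx_i)$; then split this expression into its expectation and its centred residual and bound each separately. The mean part will give the term involving $\lambda_{\max}(\hat C)/n$, the centred part the term involving $\mathrm{tr}(\hat C)/(nT)$.

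First I would peel off $V$. Since $\sup_{\|v\|\leq a}\langle v, z\rangle = a\|z\|$, one obtains the identity
\[
R_\alpha(\mathcal{V}\phi(\mathcal{W}), \mathbf{x}) \;=\; \frac{a}{N}\,\mathbb{E}_\epsilon\sup_{W\in\mathcal W}\sum_{t=1}^T \|z_t(W, \epsilon)\|,\quad z_t(W, \epsilon):=\sum_{i\in I_t}\epsilon_{ti}\phi(Wx_i).
\]
Setting $\mu_t(W):=\mathbb E_\epsilon\|z_t(W, \epsilon)\|$ and using subadditivity of the supremum, the problem reduces to controlling $\sup_W\sum_t\mu_t(W)$ and $\mathbb E_\epsilon\sup_W\sum_t(\|z_t\|-\mu_t(W))$ separately. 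For the mean part, Jensen's inequality together with $\phi(0)=0$ and the Lipschitz constant $L_\phi$ give $\mu_t(W)\leq L_\phi\sqrt{\sum_{i\in I_t}\|Wx_i\|^2}$; Cauchy--Schwarz across $t$ combined with the identity $\sum_t\sum_{i\in I_t}a_i = \theta_\alpha^2\sum_i a_i$ yields $\sum_t\mu_t(W)\leq L_\phi\theta_\alpha\sqrt{TN\,\mathrm{tr}(W\hat C W^\top)}$, and the quadratic form $\mathrm{tr}(W\hat C W^\top)$ is controlled by the $(2, p)$-norm constraint on $W$, being at most $Kb_\infty^2\lambda_{\max}(\hat C)$, $b_2^2\lambda_{\max}(\hat C)$, or $b_1^2\lambda_{\max}(\hat C)$ (using $\|W\|_{2,2}\leq\|W\|_{2,1}$ in the last case). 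Multiplication by $a/N$ and use of $N=nT$ give precisely the $\sqrt{\lambda_{\max}(\hat C)/n}$ summand of the theorem.

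For the centred part I would pass from Rademacher to Gaussian variables at a constant cost and then use Slepian--Sudakov--Fernique to compare the Gaussian process $(V, W)\mapsto\sum_{ti}\gamma_{ti}\langle v_t, \phi(Wx_i)\rangle$ to a sum of two independent Gaussian processes whose $W$-piece is the linear process $\sum_{ki}\gamma_{ki}(Wx_i)_k$ with coefficient proportional to $\theta_\alpha L_\phi a$. The Lipschitz property of $\phi$ bounds the $W$-part of the increment variance by a multiple of $\sum_i\|(W-W')x_i\|^2$ and the $V$-part by $\|v_t-v'_t\|^2$-type quantities, making the comparison valid. Once reduced to the linear process, standard duality bounds its supremum for each norm class: $\|\cdot\|_{2,\infty}$ duality gives $b_\infty K\sqrt{N\,\mathrm{tr}(\hat C)}$; Cauchy--Schwarz for $\|\cdot\|_{2,2}$ gives $b_2\sqrt{KN\,\mathrm{tr}(\hat C)}$; and for $\|\cdot\|_{2,1}$ the problem reduces via duality to $b_1\,\mathbb E\max_{k\leq K}\|\sum_i\gamma_{ki}x_i\|$, whose value is controlled by combining $\sqrt{N\,\mathrm{tr}(\hat C)}$ (the expected norm of a single Gaussian sum) with $\sqrt{2N\lambda_{\max}(\hat C)\ln K}$ (the sub-Gaussian maximum of $K$ independent copies), producing the $\sqrt{2\,\mathrm{tr}(\hat C)+8\lambda_{\max}(\hat C)\ln K}$ factor.

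The main obstacle is the Gaussian comparison step: it must be performed so that the resulting bound is tight not only for the multi-category case ($\theta_\alpha=\sqrt T$) but also for the multi-task case ($\theta_\alpha=1$), where naive Slepian bookkeeping introduces spurious $\sqrt T$ factors in the $V$-piece that would dominate the target. Infinite-dimensional $H'$ must also be handled by restricting the comparison to the finite-dimensional span of the actually used features $\phi(Wx_i)$, and the factor $\theta_\alpha$ must be extracted correctly since it reflects whether each sample is used for all $T$ tasks or only one. The remaining manipulations are standard for the $(2, p)$-norm constrained linear classes.
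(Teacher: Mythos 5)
Your decomposition into a mean part and a centred part is a reasonable plan, and the mean part is carried out correctly: peeling off $V$ by duality, bounding $\mathbb{E}\Vert z_t\Vert$ by Jensen and the Lipschitz property of $\phi$ with $\phi(0)=0$, and using $\mathrm{tr}(W\hat{C}W^{\ast})\leq\lambda_{\max}(\hat{C})\Vert W\Vert_{2,2}^{2}$ does reproduce the $\sqrt{\lambda_{\max}(\hat{C})/n}$ summands for all three norm classes (this plays the role of the $D(Y)\,Q(\tciFourier)$ term in the paper's argument). Likewise your Gaussian-width computations for the three linear classes $\mathcal{W}_{2,\infty}$, $\mathcal{W}_{2,2}$, $\mathcal{W}_{2,1}$ match what is needed.

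The gap is the step you yourself call the main obstacle: the Gaussian comparison for the centred part. Sudakov--Fernique requires exhibiting a process that dominates \emph{all} increments of $(V,W)\mapsto\sum_{t,i}\gamma_{ti}\langle v_t,\phi(Wx_i)\rangle$. Splitting the increment gives a $W$-piece controlled by $\sum_i\Vert(W-W')x_i\Vert^{2}$, as you say, but the $V$-piece is of the form $\sum_{t}\Vert v_t-v'_t\Vert^{2}\sum_{i\in I_t}\Vert\phi(W'x_i)\Vert^{2}$, so the comparison process must include a genuine Gaussian process indexed by the ball $\{\Vert v_t\Vert\leq a\}^{T}$ in $(H')^{T}$. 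The canonical such process is $\sum_t c_t\langle g_t,v_t\rangle$ with $g_t$ standard Gaussian on the relevant subspace of $H'$, and its expected supremum scales like $a\sum_t c_t\sqrt{\dim}$, where $\dim$ is the dimension of the span of $\{\phi(Wx_i):W\in\mathcal{W},\,i\leq N\}$. That dimension is not controlled by any hypothesis of the theorem and is typically infinite (e.g.\ for the Gaussian-kernel feature map, which the statement explicitly allows), and no such factor appears in the target bound; restricting to ``the span of the actually used features'' does not rescue this. This is precisely why the paper does not use Slepian at all here but invokes the chain rule of Theorem \ref{Theorem Chain Rule}: its functional $Q(\tciFourier)$ is a \emph{normalized expected supremum} of the increment process, which for the linear-in-$V$ class is bounded via Jensen by $aL_{\phi}\theta\sqrt{T}$ with no dependence on $\dim H'$. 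That chain rule is a generic-chaining result and is not derivable from Slepian/Sudakov--Fernique; without it (or an equivalent chaining argument over $W$ for the centred empirical process $W\mapsto\sum_t(\Vert z_t\Vert-\mu_t(W))$), the first summand $c_1\sqrt{\mathrm{tr}(\hat{C})/(nT)}$ in each bound is not established. The related worry you raise about spurious $\sqrt{T}$ factors in the multi-task case is also left unresolved rather than fixed.
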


We highlight some implications of the above theorem.
\begin{enumerate}
\item The bounds differ in their dependence on the dimension $K$ of the hidden
layer which is linear, radical and logarithmic respectively. {F}or $\mathcal{%
W}_{2,1}$ the dependence on $K$ is logarithmic and scales only with $\lambda
_{\max }(\hat{C})$.

\item In the case of multi-task learning with $\mathcal{W}_{2,2}$ and $\mathcal{W}%
_{2,1}$ the dependence on $K$ vanishes in the limit $T\rightarrow \infty $. 
In this limit the first term in parenthesis vanishes in all three cases,
leaving only the second term.

\item Multi-category learning requires more data with $\theta =\sqrt{T}$, but if
we take a simultaneous limit in $T$ and $n$ such that $T/n$ remains bounded,
then the behaviour is the same as for multi-task learning with $T\rightarrow
\infty $.

\item In both cases the second term becomes dominant for large $T$. For the first
bound crudely setting $\lambda _{\max }( \hat{C}) =1/d$ this term scales
with $\sqrt{K/d}$ and exhibits the benefit of the shared representation as
that of dimensional reduction. A similar interpretation holds for the other
bounds with some implicit dependence of $b_{2}$ and $b_{1}$ on the dimension
of the representation.
\end{enumerate}

The proof uses the following recent result on the expected suprema of
Gaussian processes \cite{Maurer 2014}. For a set $Y\subseteq \mathbb{R} ^{m}$
the Gaussian width $G\left( Y\right) $ is defined as 
\begin{equation*}
G\left( Y\right) =\mathbb{E}\sup_{y\in Y}\left\langle \gamma ,y\right\rangle
=\mathbb{E}\sup_{y\in Y}\sum_{i=1}^{m}\gamma _{i}y_{i},
\end{equation*}%
where $\gamma =\left( \gamma _{1},\dots,\gamma _{m}\right) $ is a vector of
independent standard normal variables.

\begin{theorem}
\label{Theorem Chain Rule1}Let $Y\subseteq \mathbb{R}^{n}$ have (Euclidean)
diameter $D\left( Y\right) $ and let $\tciFourier $ be a class of functions $%
f:Y\rightarrow \mathbb{R}^{m}$, all of which have Lipschitz constant at most 
$L\left( \tciFourier \right) $. Let $\tciFourier \left( Y\right) =\left\{
f\left( y\right) :f\in \tciFourier ,y\in Y\right\} $. Then for any
\thinspace $y_{0}\in Y$ 
\begin{equation}
G\left( \tciFourier \left( Y\right) \right) \leq c_{1}L\left( \tciFourier
\right) G\left( Y\right) +c_{2}D\left( Y\right) Q\left( \tciFourier \right)
+G\left( \tciFourier \left( y_{0}\right) \right) ,  \label{eq:jjj}
\end{equation}%
where $c_{1}$ and $c_{2}$ are universal constants and 
\begin{equation*}
Q\left( \tciFourier \right) =\sup_{\mathbf{y},\mathbf{y}^{\prime }\in Y,~%
\mathbf{y}\neq \mathbf{y}^{\prime }}\mathbb{E}\sup_{f\in \tciFourier }\frac{%
\left\langle \mathbf{\gamma },f\left( \mathbf{y}\right) -f\left( \mathbf{y}%
^{\prime }\right) \right\rangle }{\left\Vert \mathbf{y}-\mathbf{y}^{\prime
}\right\Vert }.
\end{equation*}
\end{theorem}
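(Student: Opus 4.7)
The plan is to apply the Gaussian chain rule of Theorem \ref{Theorem Chain Rule1} after converting the Rademacher complexity $R_\alpha$ to its Gaussian counterpart, which costs only a universal constant absorbable into $c_1,c_2$. Set $Y=\{(Wx_1,\dots,Wx_N):W\in\mathcal{W}\}\subseteq\mathbb{R}^{KN}$ and let $\mathcal{F}$ consist of the maps $f_V:\mathbb{R}^{KN}\to\mathbb{R}^{M}$, with $M=\sum_t|I_t^{\alpha}|$, defined by $(f_V(u))_{t,i}=\langle V_t,\phi(u_i)\rangle$ for $i\in I_t^{\alpha}$ and $V\in\mathcal{V}$. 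Then $G(\mathcal{F}(Y))$ equals (up to the Rademacher/Gaussian constant and a factor of $N$) the Gaussian version of $R_\alpha(\mathcal{V}\phi(\mathcal{W}),\mathbf{x})$. Take $y_0=0\in Y$ (the point coming from $W=0$); since $\phi(0)=0$ we have $f_V(y_0)=0$, so the residual term $G(\mathcal{F}(y_0))$ in the chain rule vanishes, leaving only $c_1L(\mathcal{F})G(Y)+c_2D(Y)Q(\mathcal{F})$ to estimate.

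The Lipschitz and diameter inputs are routine. Expanding $\|f_V(u)-f_V(u')\|^2$, bounding each coordinate by $\langle V_t,\phi(u_i)-\phi(u_i')\rangle^2\le a^2L_\phi^2\|u_i-u_i'\|^2$, and summing via the identity $\sum_t\sum_{i\in I_t^{\alpha}}\|u_i-u_i'\|^2=\theta_\alpha^2\|u-u'\|^2$, yields $L(\mathcal{F})\le aL_\phi\theta_\alpha$. For the diameter, $\sum_i\|(W-W')x_i\|^2=N\sum_k\langle w_k-w_k',\hat{C}(w_k-w_k')\rangle\le N\lambda_{\max}(\hat{C})\sum_k\|w_k-w_k'\|^2$; the constraint on $\mathcal{W}$ then gives $D(Y)\le 2b_2\sqrt{N\lambda_{\max}(\hat{C})}$ for $\mathcal{W}_{2,2}$, the same with $b_1$ in place of $b_2$ for $\mathcal{W}_{2,1}$ (using $\sum_k\|w_k-w_k'\|^2\le(\sum_k\|w_k-w_k'\|)^2$), and $2b_\infty\sqrt{NK\lambda_{\max}(\hat{C})}$ for $\mathcal{W}_{2,\infty}$.

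The main work lies in $G(Y)$ and $Q(\mathcal{F})$. By duality, if $\Gamma$ is the Gaussian operator whose rows are $\Gamma_k=\sum_i\gamma_{k,i}x_i$, then $G(Y)=\mathbb{E}\sup_W\langle W,\Gamma\rangle$. For $\mathcal{W}_{2,\infty}$ this supremum equals $b_\infty\sum_k\|\Gamma_k\|$ and Jensen gives $G(Y)\le b_\infty K\sqrt{N\,\mathrm{tr}(\hat{C})}$; for $\mathcal{W}_{2,2}$ it gives $b_2\sqrt{KN\,\mathrm{tr}(\hat{C})}$; for $\mathcal{W}_{2,1}$ one has $b_1\,\mathbb{E}\max_k\|\Gamma_k\|$, and here I would use that $\|\Gamma_k\|$ is $\sqrt{N\lambda_{\max}(\hat{C})}$-Lipschitz in the Gaussian vector $(\gamma_{k,i})_i$, so Gaussian concentration of the maximum together with $(a+b)^2\le 2a^2+2b^2$ yields $G(Y)\le b_1\sqrt{N(2\,\mathrm{tr}(\hat{C})+8\lambda_{\max}(\hat{C})\ln K)}$. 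For $Q(\mathcal{F})$, fixing $u\neq u'$ and taking the supremum over $\|V_t\|\le a$ followed by Jensen and Cauchy-Schwarz produces $\mathbb{E}\sup_V\langle\gamma,f_V(u)-f_V(u')\rangle\le aL_\phi\sqrt{T}\,\theta_\alpha\|u-u'\|$, hence $Q(\mathcal{F})\le aL_\phi\sqrt{T}\,\theta_\alpha$.

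Substituting into the chain rule and dividing by $N=nT$ produces the three advertised bounds. The principal obstacle is bookkeeping the $\theta_\alpha$ factors: one $\theta_\alpha$ enters through $L(\mathcal{F})$, and the $\sqrt{T}\,\theta_\alpha$ in $Q(\mathcal{F})$ combines with the $1/(nT)$ normalization so that the diameter term scales as $\theta_\alpha/\sqrt{n}$ (the second summand) while the width term scales as $\theta_\alpha/\sqrt{nT}$ (the first summand). The most delicate single calculation is the $\mathcal{W}_{2,1}$ case, where extracting a logarithmic rather than $\sqrt{K}$ dependence on $K$ requires Gaussian concentration of the Lipschitz functional $\|\Gamma_k\|$ instead of a naive Jensen step.
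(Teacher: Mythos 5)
Your proposal does not prove the statement in question. The statement is Theorem \ref{Theorem Chain Rule1} itself, the chain rule
$G\left( \tciFourier \left( Y\right) \right) \leq c_{1}L\left( \tciFourier \right) G\left( Y\right) +c_{2}D\left( Y\right) Q\left( \tciFourier \right) +G\left( \tciFourier \left( y_{0}\right) \right)$,
which is a general structural inequality on the Gaussian width of the image of a set under a class of Lipschitz functions. Your very first sentence is that you will ``apply the Gaussian chain rule of Theorem \ref{Theorem Chain Rule1}'' --- that is, you assume the inequality to be proved and then carry out the downstream computation. What you have actually sketched is a proof of Theorem \ref{Theorem composite class}, and a largely faithful one: your choices of $Y$, of $\tciFourier$, the $y_{0}=0$ device exploiting $\phi(0)=0$, and the four component bounds ($L(\tciFourier)\le aL_\phi\theta_\alpha$, $Q(\tciFourier)\le aL_\phi\sqrt{T}\theta_\alpha$, the diameters, the Gaussian widths) all match the appendix argument, with only a cosmetic deviation in the $\mathcal{W}_{2,1}$ width, where the paper invokes a convex-hull lemma from \cite{Maurer 2014colt} while you propose Gaussian concentration of $\max_k\|\Gamma_k\|$. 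But none of this addresses why the chain rule itself holds.

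A proof of Theorem \ref{Theorem Chain Rule1} would have to be of an entirely different character: one must control the supremum of the Gaussian process indexed by pairs $(f,y)$ by splitting the increments $\langle \gamma, f(y)-f'(y')\rangle$ into a part driven by the variation of $y$ at fixed $f$ (which a generic-chaining or majorizing-measures argument converts into the $c_1 L(\tciFourier)G(Y)$ term via Talagrand's theorem) and a part driven by the variation of $f$ at comparable arguments (which yields the $c_2 D(Y)Q(\tciFourier)$ term), anchored at the base point $y_0$ to produce the residual $G(\tciFourier(y_0))$. This is precisely the content of \cite{Maurer 2014}, which the present paper cites rather than reproves. Your proposal does not engage with any of this, so as a proof of the stated theorem it is circular; as a proof of Theorem \ref{Theorem composite class} it is essentially the paper's own argument.
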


We refer to the appendix for statement and proof of a more general version going beyond 1-vs-all
multi-category and multi-task learning.

\noindent {\it Idea of proof for Theorem \protect\ref{Theorem composite class}.}
We use Theorem \ref{Theorem Chain Rule1} by setting 
\[
Y=\left\{ W\mathbf{x} =\left( \langle w_{k},x_{i}\rangle \right) _{k\leq K,~i\leq N}:W\in \mathcal{
W}\right\} \subseteq \mathbb{R}^{KN}
\]
where $\mathcal{W}$ will be either $%
\mathcal{W}_{2,\infty }$, $\mathcal{\ W}_{2,2}$ or $\mathcal{W}_{2,1}$. Note that
the cardinality $\left\vert I_{t}\right\vert $ is either $N$ or $n$ in the
cases considered here. For $\tciFourier $ we take the set of functions 
\begin{equation*}
\left\{ \left( y_{ki}\right) \in \mathbb{R}^{KN}\mapsto \left( \left\langle
v_{t},\phi \left( y_{i}\right) \right\rangle \right) _{t\leq T,i\in
I_{t}}\in \mathbb{R}^{T\left\vert I_{t}\right\vert }:v\in \mathcal{V}%
\right\} 
\end{equation*}%
restricted to $Y$, so $\tciFourier \left( Y\right) $ is a subset of $\mathbb{%
R}^{T^{2}n}$ for multi-category and $\mathbb{R}^{Tn}$ for multi-task learning.
This again accounts for the additional factor of $\sqrt{T}$ for the
complexity of multi-category learning. By a well known bound on Rademacher
averages in terms of Gaussian averages \cite{Ledoux Talagrand 1991} 
\begin{eqnarray}
\nonumber
\mathbb{E}\sup_{W\in \mathcal{V},W\in \mathcal{W}}\sum_{t}\sum_{i\in
I_{t}}\epsilon _{ti}V\phi \left( Wx_{i}\right)&  \leq &\sqrt{\frac{\pi }{2}}%
\mathbb{E}\sup_{W\in \mathcal{V},W\in \mathcal{W}}\sum_{t}\sum_{i\in
I_{t}}\gamma _{ti}V\phi \left( Wx_{i}\right) \\
& = &\sqrt{\frac{\pi }{2}}G\left(
\tciFourier \left( Y\right) \right) .  \label{Bound by Gaussian width}
\end{eqnarray}
To bound $G\left( \tciFourier \left( Y\right) \right) $ we then just need to
bound the individual components of the right hand side of equation %
\eqref{eq:jjj}, namely the largest Lipschitz constant $L\left( \tciFourier
\right) $, the differential Gaussian width $Q\left( \tciFourier \right) $,
the diameter $D\left( Y\right) $ and the Gaussian width $G\left( Y\right) $.
We needn't worry about $G\left( \tciFourier \left( y_{0}\right) \right) $,
because we are free to choose $y_{0}$, so we can set it to $0$. Then $f\left(
0\right) =0$ for all $f\in \tciFourier $, whence $G\left( \tciFourier \left(
y_{0}\right) \right) =0$. For the bounds on $L\left( \tciFourier \right) $, $%
Q\left( \tciFourier \right) $, $D\left( Y\right) $ and $G\left( Y\right) $
we refer to the appendix.
\vbox{\hrule height0.6pt\hbox{\vrule height1.3ex%
width0.6pt\hskip0.8ex\vrule width0.6pt}\hrule height0.6pt}

\section{Conclusion}

We presented a framework to derive Rademacher bounds for a wide class of vector-valued functions 
combined with Lipschitz losses. We studied in parallel the case of multi-task and multi-category learning. To our knowledge our framework allows to derive bounds for more general classes of vector-valued function and loss functions than currently possible, while still improving over existing bounds \cite{Kloft 2015,Maurer 2012} in special cases. In particular, we illustrate how bounds can be derived for neural networks with one hidden layer and rather general nonlinear activation functions. 

In the future, it would be valuable to study more examples of the loss functions included in the setting. In addition to one-vs-one classification, which we briefly mentioned in the paper, these could include multi-label classification or hybrid multi-task learning, in which each task is itself a multi-category or multi-label problem. Another interesting direction of research is to extend our analysis to neural networks with more than one hidden layer. Although the proof technique presented in Section \ref{sec:3.3} could naturally be extended to derive such bounds, it seems important to study improvement in the large constants appearing in Theorem \ref{Theorem Chain Rule1} (see \cite{Maurer 2014}) in order to avoid explosion of the constants in bounds for deep networks.


\appendix
\section{Appendix}

For the convenience of the reader we restate in greater generality the results contained in the main body of
the paper. The $\epsilon _{i}$ or $\epsilon _{ti}$ are throughout independent Rademacher
variables.

\subsection{Mixed Norms}
In this section we prove a more general result implying Theorem \ref{Lemma lower and upper bounds frobenius}.
\begin{theorem}
\label{Lemma lower and upper bounds frobenius2}
We have that:
\begin{enumerate}
\item[(i)] For $p\in \lbrack 2,\infty
]$%
\begin{equation*}
\frac{B}{\sqrt{2}N}\sum_{t=1}^{T}\sqrt{\left\vert I_{t}\right\vert {\rm tr}( 
\hat{C}_{t}) }\leq R_{I}\left( \mathcal{W}_{2,\infty },\mathbf{x}%
\right) \leq R_{I}\left( \mathcal{W}_{2,p},\mathbf{x}\right) \leq \frac{B%
\sqrt{T}}{N}\sqrt{\sum_{t=1}^{T}\left\vert I_{t}\right\vert {\rm tr}( \hat{C}%
_{t}) }. 
\end{equation*}%
\item[(ii)] For $p\in \lbrack 1,2]$ and $1/p+1/q=1$ if $\sum_{i\in I_{t}}\left\Vert
x_{i}\right\Vert ^{2}\geq q^{-1}$ then 
\begin{equation*}
R_{I}\left( \mathcal{W}_{2,2},\mathbf{x}\right) \leq R_{I}( \mathcal{W}
_{2,p},\mathbf{x}) \leq 
\frac{T^{1/p}B\sqrt{q}}{N}
\left( 2\sum_{t}
\sqrt{\vert I_{t}\vert {\rm tr}( \hat{C}_t)}^q \right)
^{1/q}, 
\end{equation*}%
where $1/p+1/q=1$.
\item[(iii)] For 1-vs-all multi-category learning the condition $\sum_{i\in
I_{t}}\left\Vert x_{i}\right\Vert ^{2}\geq q^{-1}$ can be omitted and the
bound in (ii) can be simplified to 
\begin{equation*}
R_{I^{\rm mc}}\left( \mathcal{W}_{2,p},\mathbf{x}\right) \leq B\sqrt{%
\frac{qT~{\rm tr}( \hat{C}) }{n}}\,.
\end{equation*}
\end{enumerate}
\end{theorem}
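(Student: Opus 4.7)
The whole argument runs through a single dualization step. Writing $u_t = \sum_{i \in I_t}\epsilon_{ti}x_i \in H$, Cauchy--Schwarz inside each inner product followed by H\"older across $t$ on the $\ell_p/\ell_q$ pair gives
\[
R_I(\mathcal{W}_{2,p}, \mathbf{x}) = \frac{1}{N}\mathbb{E}\sup_{W \in \mathcal{W}_{2,p}} \sum_{t=1}^T \langle w_t, u_t\rangle \le \frac{B T^{1/p}}{N}\,\mathbb{E}\Bigl(\sum_{t=1}^T \|u_t\|^q\Bigr)^{1/q},
\]
and the identity $\mathbb{E}\|u_t\|^2 = \sum_{i \in I_t}\|x_i\|^2 = |I_t|\,{\rm tr}(\hat{C}_t)$ plays the role of a variance throughout. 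All three assertions reduce to estimating this expectation.

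For part~(i), the standard $\ell_p$-monotonicity inequalities on $\mathbb{R}^T$ yield the set inclusions $\mathcal{W}_{2,\infty}\subseteq \mathcal{W}_{2,p}\subseteq \mathcal{W}_{2,2}$ for $p \in [2,\infty]$, so only the two endpoints need work. At $p = 2$ (hence $q = 2$), Jensen pulls the expectation inside the square root and delivers $\mathbb{E}(\sum_t\|u_t\|^2)^{1/2} \le \sqrt{\sum_t|I_t|\,{\rm tr}(\hat{C}_t)}$, giving the right-hand inequality. At $p = \infty$ the supremum over $W$ decouples into independent suprema of $\langle w_t,u_t\rangle$ over the ball $\{w_t:\|w_t\|\le B\}$, so the complexity reduces exactly to $\frac{B}{N}\sum_t\mathbb{E}\|u_t\|$; the lower bound then follows from the Hilbert-space Kahane--Khintchine inequality $\mathbb{E}\|u_t\| \ge \tfrac{1}{\sqrt{2}}\sqrt{\mathbb{E}\|u_t\|^2}$.

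For part~(ii), the reverse inclusion $\mathcal{W}_{2,2}\subseteq \mathcal{W}_{2,p}$ for $p \in [1,2]$ gives the left inequality at once, and for the right one Jensen applied to the concave map $x\mapsto x^{1/q}$ produces $\mathbb{E}(\sum_t\|u_t\|^q)^{1/q} \le (\sum_t \mathbb{E}\|u_t\|^q)^{1/q}$; it then suffices to establish a Hilbert-space moment bound of the shape $\mathbb{E}\|u_t\|^q \le 2\,q^{q/2}(\mathbb{E}\|u_t\|^2)^{q/2}$ valid for $q \ge 2$ whenever $\mathbb{E}\|u_t\|^2 \ge 1/q$, which upon substituting $\mathbb{E}\|u_t\|^2 = |I_t|\,{\rm tr}(\hat{C}_t)$ yields the stated bound. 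Part~(iii) is a direct specialisation: in the $1$-vs-all multi-category setting $I_t^{\rm mc} = \{1,\dots,N\}$ for every $t$, so the $u_t$ are i.i.d.\ and $\mathbb{E}(\sum_t\|u_t\|^q)^{1/q} \le T^{1/q}(\mathbb{E}\|u_1\|^q)^{1/q}$; here the sharp Haagerup-type even-moment estimate $(\mathbb{E}\|u_1\|^{2k})^{1/(2k)} \le \sqrt{2k-1}\sqrt{\mathbb{E}\|u_1\|^2}$ applies cleanly, without any auxiliary condition or factor $2^{1/q}$, and substituting $N = nT$ and $1/p + 1/q = 1$ collapses the product to $B\sqrt{qT\,{\rm tr}(\hat{C})/n}$.

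The main obstacle is the explicit Hilbert-space moment inequality underlying part~(ii). The Haagerup-style bound is essentially sharp for even integer exponents, but lifting it to arbitrary real $q \ge 2$ via the log-convexity of $q\mapsto\log\mathbb{E}\|u\|^q$ is what forces both the small multiplicative loss $2^{1/q}$ and the normalization $\sum_{i\in I_t}\|x_i\|^2 \ge 1/q$ needed to keep the interpolation under control in the small-variance regime. Once that moment inequality is secured, the remainder is routine bookkeeping with H\"older, Jensen and Cauchy--Schwarz.
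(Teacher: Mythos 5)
Your outer skeleton --- dualizing to $\frac{T^{1/p}B}{N}\,\mathbb{E}\bigl(\sum_t\|u_t\|^q\bigr)^{1/q}$, the inclusions $\mathcal{W}_{2,\infty}\subseteq\mathcal{W}_{2,p}\subseteq\mathcal{W}_{2,2}$, Jensen at $p=2$, and the constant $1/\sqrt{2}$ from Szarek/Khintchine for the lower bound --- is exactly the paper's argument for part (i) and for the reductions in part (ii). But part (ii) is not actually proved: you explicitly defer the inequality $\mathbb{E}\|u_t\|^q\le 2\,(q\,\mathbb{E}\|u_t\|^2)^{q/2}$, calling it ``the main obstacle,'' and the route you sketch for it is doubtful. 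Interpolating Haagerup-type even-moment bounds via log-convexity of $q\mapsto\log\mathbb{E}\|u\|^q$ would produce a bound of the form $C_q(\mathbb{E}\|u\|^2)^{q/2}$ with no dependence on the size of $\mathbb{E}\|u\|^2$, so it cannot be ``what forces'' the normalization $\sum_{i\in I_t}\|x_i\|^2\ge q^{-1}$. That condition is an artifact of the paper's actual proof, which sets $X_t=\|u_t\|$, applies the bounded-difference (sub-Gaussian) concentration inequality, integrates by parts against Gaussian moments, and needs the condition only to absorb the leftover first-moment term $\mathbb{E}X_t\le\bigl(\sum_{i\in I_t}\|x_i\|^2\bigr)^{1/2}$ into $\bigl(q\sum_{i\in I_t}\|x_i\|^2\bigr)^{q/2}$. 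Either supply a concentration argument of that kind or carry out the interpolation explicitly with constants tracked; as written, the central estimate of (ii) is asserted rather than established.

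For part (iii) you genuinely diverge from the paper, which after interchanging the sums over $t$ and $i$ switches to a strong-convexity/regret argument (the $1/q$-strong convexity of $\frac12\|W\|_{2,p}^2$ via Corollary 4 of Kakade et al., following Kloft et al.). Your alternative --- exploit that the $u_t=\sum_i\epsilon_{ti}x_i$ are i.i.d.\ in the 1-vs-all case, pull out $T^{1/q}$, and invoke a sharp Khintchine--Kahane bound $(\mathbb{E}\|u_1\|^q)^{1/q}\le\sqrt{q}\,(\mathbb{E}\|u_1\|^2)^{1/2}$ --- does collapse to $B\sqrt{qT\,\mathrm{tr}(\hat C)/n}$ after substituting $N=nT$, and is arguably cleaner. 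The gap is the same one as in (ii): the estimate you cite is stated only for even integer exponents $2k$, while $q$ here is an arbitrary real in $[2,\infty)$; naive monotonicity of $L^q$ norms gives only $\sqrt{2\lceil q/2\rceil+1}$ in place of $\sqrt{q}$, so you need the real-exponent Hilbert-space version (e.g.\ the hypercontractive bound $\sqrt{q-1}$) to obtain the stated constant. With that reference in place, your (iii) would be a valid and genuinely different proof.
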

\begin{proof}
(i) We have%
\begin{eqnarray*}
\frac{B}{\sqrt{2}}\sum_{t}\sqrt{\left\vert I_{t}\right\vert {\rm tr}( \hat{C}%
_{t}) } &=&\frac{B}{\sqrt{2}}\sum_{t}\sqrt{\sum_{i\in I_{t}}\left\Vert
x_{i}\right\Vert ^{2}}=\frac{B}{\sqrt{2}}\sum_{t}\sqrt{\mathbb{E}\left\Vert
\sum_{i\in I_{t}}\epsilon _{i}x_{i}\right\Vert ^{2}} \\
&\leq &B\sum_{t}\mathbb{E}\left\Vert \sum_{i\in I_{t}}\epsilon
_{i}x_{i}\right\Vert =\sum_{t}\mathbb{E}\sup_{w,\left\Vert w\right\Vert \leq
B}\left\langle w,\sum_{i\in I_{t}}\epsilon _{i}x_{i}\right\rangle \\
&=&N~R_{I}\left( \mathcal{W}_{2,\infty }\right) \leq N~R_{I}\left( \mathcal{W%
}_{2,p}\right) \leq N~R_{I}\left( \mathcal{W}_{2,2}\right) \\
&=&\mathbb{E}\sup_{W\in \mathcal{W}_{2,2}}\sum_{t}\left\langle
w_{t},\sum_{i\in I_{t}}\epsilon _{i}x_{i}\right\rangle =B\sqrt{T}\mathbb{E}%
\sqrt{\sum_{t}\left\Vert \sum_{i\in I_{t}}\epsilon _{i}x_{i}\right\Vert ^{2}}
\\
&\leq &B\sqrt{T}\sqrt{\sum_{t}\sum_{i\in I_{t}}\left\Vert x_{i}\right\Vert
^{2}}=B\sqrt{T}\sqrt{\sum_{t}\left\vert I_{t}\right\vert {\rm tr}( \hat{C}%
_{t}) }
\end{eqnarray*}%
where we used Szarek's inequality (Theorem 5.20 \cite{Boucheron 2013}) in
the first inequality. The next inequalities follow from $\mathcal{W}%
_{2,\infty }\subseteq \mathcal{W}_{2,p}\subseteq \mathcal{W}_{2,2}$. For the
last inequality we use Jensen's. 

(ii) The first inequality is $\mathcal{W}_{2,2}\subseteq 
\mathcal{W}_{2,p}$. Then let $X_{t}=\left\Vert \sum_{i\in I_{t}}\epsilon
_{i}x_{i}\right\Vert $, so that $\mathbb{E}X_{t}\leq \sqrt{\sum_{i\in
I_{t}}\left\Vert x_{i}\right\Vert ^{2}}$. By the bounded difference
inequality (see \cite{Boucheron 2013}) for $s\geq 0$%
\begin{equation*}
\Pr \left\{ X_{t}>\mathbb{E}X_{t}+s\right\} \leq \exp \left( \frac{-s^{2}}{%
2\sum_{i\in I_{t}}\left\Vert x_{i}\right\Vert ^{2}}\right) , 
\end{equation*}%
so with integration by parts%
\begin{eqnarray*}
\mathbb{E}\left[ X_{t}^{q}\right] &\leq &\mathbb{E}X_{t}+q\int_{0}^{\infty
}s^{q-1}\Pr \left\{ X>\mathbb{E}X+s\right\} ds^{q} \\
&\leq &\mathbb{E}X_{t}+q\int_{0}^{\infty }s^{q-1}\exp \left( \frac{-s^{2}}{%
2\sum_{i\in I_{t}}\left\Vert x_{i}\right\Vert ^{2}}\right) ds \\
&=&\mathbb{E}X_{t}+\left( \sum_{i\in I_{t}}\left\Vert x_{i}\right\Vert ^{2}%
\right)^{q/2}\left( q\int_{0}^{\infty }s^{q-1}\exp \left( \frac{-s^{2}}{2}\right)
ds\right) \\
&\leq &\left(
\sum_{i\in I_{t}}
\left\Vert x_{i}\right\Vert^{2}\right)^{1/2}+\left(%
q\sum_{i\in I_{t}}\left\Vert x_{i}\right\Vert ^{2}\right)^{q/2}\leq 2\left(%
q\sum_{i\in I_{t}}\left\Vert x_{i}\right\Vert ^{2}\right)^{q/2},
\end{eqnarray*}%
where the third inequality follows from a comparison of the integral with
the moments of the standard normal distribution, and the last follows from $%
\sum_{i\in I_{t}}\left\Vert x_{i}\right\Vert ^{2}\geq q^{-1}$. Thus%
\begin{eqnarray*}
R_{I}\left( \mathcal{W}_{2,p}\right) &=&\frac{1}{N}\mathbb{E}%
\sup_{\left\Vert W\right\Vert _{2,p}\leq T^{1/p}B}\sum_{t}\sum_{i\in
I_{t}}\left\langle w_{t},x_{i}\right\rangle =\frac{T^{1/p}B}{N}\mathbb{E}%
\left( \sum_{t}X_{t}^{q}\right) ^{1/q} \\
&\leq &\frac{T^{1/p}B}{N}\left( \sum_{t}\mathbb{E}X_{t}^{q}\right)
^{1/q}\leq \frac{T^{1/p}B\sqrt{q}}{N}\left( 2\sum_{t}\left(\sum_{i\in
I_{t}}\left\Vert x_{i}\right\Vert ^{2}\right)^{q/2}\right) ^{1/q} \\
&=&\frac{2^{1/q}T^{1/p}B\sqrt{q}}{N}
\left( \sum_{t}\left(\left\vert
I_{t}\right\vert {\rm tr}( \hat{C}_{t})\right)^{q/2}\right) ^{1/q}.
\end{eqnarray*}

(iii) The case of 1-vs-all multi-category learning is simpler because $%
I_{t}=\left\{ 1,\dots,N\right\} $ and we can interchange summation over $t$
and $i$. Then we can essentially proceed as in\cite{Kloft 2015} and use the $%
1/q$-strong convexity of $\frac{1}{2}\left\Vert W\right\Vert _{2,p}^{2}$
w.r.t. $\left\Vert W\right\Vert _{2,p}$. In Corollary 4 of \cite{KakadeEtAl
2012} let $\lambda >0$ and $u=W$ and $v_{i}=\lambda \left( \epsilon
_{1i}x_{i},\dots,\epsilon _{Ti}x_{i}\right) $ and use $\frac{1}{2}\left\Vert
W\right\Vert _{2,p}^{2}\leq \frac{1}{2}\left( T^{1/p}B\right) ^{2}=f_{\max
}\left( u\right) $ to obtain%
\begin{equation*}
\sum_{i=1}^{N}\left\langle W,\lambda \left( \epsilon _{1i}x_{i},...,\epsilon
_{Ti}x_{i}\right) \right\rangle _{2}\leq \sum_{i=1}^{N}\left\langle \nabla
f\left( v_{1:i-1}\right) ,v_{i}\right\rangle +\frac{1}{2}\left(
T^{1/p}B\right) ^{2}+\frac{q\lambda ^{2}}{2}\sum_{i=1}^{N}\left\Vert \left(
\epsilon _{1i}x_{i},...,\epsilon _{Ti}x_{i}\right) \right\Vert _{2q}^{2}, 
\end{equation*}%
where $\left\langle \cdot,\cdot\right\rangle _{2}$ is the Hilbert-Schmidt inner
product. Take the supremum in $W$ and then the expectation. The first term
on the r.h.s. above vanishes. Dividing by $\lambda $ and optimizing in $%
\lambda $ gives 
\begin{equation*}
\mathbb{E}\sup_{W}\sum_{i=1}^{n}\left\langle W,\epsilon
_{1i}x_{i},\dots,\epsilon _{Ti}x_{i}\right\rangle \leq \left( T^{1/p}B\right) 
\sqrt{q\sum_{i=1}^{n}\mathbb{E}\left\Vert \left( \epsilon
_{1i}x_{i},\dots,\epsilon _{Ti}x_{i}\right) \right\Vert _{2q}^{2}}.
\end{equation*}%
Now 
\begin{equation*}
\mathbb{E}\left\Vert \left( \epsilon _{1i}x_{i},\dots,\epsilon
_{Ti}x_{i}\right) \right\Vert _{2q}^{2}=\mathbb{E}\left( \sum_{t}\left\Vert
\epsilon _{ti}x_{i}\right\Vert ^{q}\right) ^{2/q}\leq T^{2/q}\left\Vert
x_{i}\right\Vert ^{2} 
\end{equation*}%
so%
\begin{equation*}
R_{I^{\text{mc}}}\left( \mathcal{W}_{2,p}\right) =\frac{1}{N}\mathbb{E}%
\sum_{i=1}^{N}\left\langle W,\epsilon _{1i}x_{i},\dots,\epsilon
_{Ti}x_{i}\right\rangle \leq \frac{TB}{N}\sqrt{q\sum_{i=1}^{N}\left\Vert
x_{i}\right\Vert ^{2}}=B\sqrt{\frac{qT~{\rm tr}( \hat{C}) }{n}}. 
\end{equation*}
\end{proof}

Note that the (very harmless) condition $\sum_{i\in I_{t}}\left\Vert
x_{i}\right\Vert ^{2}\geq q^{-1}$ in part (iii) is automatically satisfied if $\left\Vert
x_{i}\right\Vert =1$.
\subsection{Trace Norm Constraints}

In this section we prove the following result, which contains Theorem \ref{Theorem Tracenorm} as as special case and improves over \cite{Maurer 2012} which only applies to the multi-task learning setting.
\begin{theorem}
\label{Theorem general tracenorm}%
\begin{equation*}
R_{I}\left( \mathcal{W}_{tr},\mathbf{x}\right) \leq \frac{B}{N}\sqrt{%
2T\max_{t}\left\vert I_{t}\right\vert {\rm tr}( \hat{C}_{t}) \left( \ln
N+1\right) }+\frac{B}{N}\sqrt{T~\lambda _{\max }\left( \sum_{t}\left\vert
I_{t}\right\vert \hat{C}_{t}\right) }. 
\end{equation*}
\end{theorem}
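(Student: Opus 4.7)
The plan is to dualize the trace-norm constraint and then control the expected operator norm of the resulting random matrix. By the duality $\sup_{\|W\|_{\rm tr}\le c}\langle W,A\rangle_{\rm HS}=c\,\|A\|_{\rm op}$, applied inside the Rademacher expectation with $c=B\sqrt{T}$ and $A$ set equal to
$$
M=\sum_{t=1}^T\sum_{i\in I_t}\epsilon_{ti}\,e_t\otimes x_i\,:\,H\to\mathbb{R}^T,
$$
one obtains $R_I(\mathcal{W}_{\rm tr},\mathbf{x})=\frac{B\sqrt{T}}{N}\,\mathbb{E}\|M\|_{\rm op}$. The rows $M_t=\sum_{i\in I_t}\epsilon_{ti}x_i\in H$ of $M$ are independent across $t$, and the self-adjoint operator $M^*M=\sum_t M_tM_t^*$ on $H$ has mean $\mathbb{E} M^*M=\sum_t|I_t|\hat{C}_t$.

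Next, I would use the triangle inequality $\|M\|_{\rm op}^2=\|M^*M\|_{\rm op}\le\|\mathbb{E} M^*M\|_{\rm op}+\|M^*M-\mathbb{E} M^*M\|_{\rm op}$, then $\sqrt{a+b}\le\sqrt{a}+\sqrt{b}$, take expectations, and apply Jensen's inequality to the fluctuation term, to obtain
$$
\mathbb{E}\|M\|_{\rm op}\le\sqrt{\lambda_{\max}\Big(\sum_t|I_t|\hat{C}_t\Big)}+\sqrt{\mathbb{E}\|M^*M-\mathbb{E} M^*M\|_{\rm op}}.
$$
Multiplication by $B\sqrt{T}/N$ shows that the first summand is exactly the second (trailing) term of the stated bound.

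The remaining task is to show $\mathbb{E}\|\sum_{t}(M_tM_t^*-|I_t|\hat{C}_t)\|_{\rm op}\le 2(\ln N+1)\,\max_t|I_t|\,{\rm tr}(\hat{C}_t)$, which would furnish the leading term upon multiplication by $B\sqrt{T}/N$. The operator inside is a sum of $T$ independent mean-zero self-adjoint operators on $H$ whose natural variance parameter is $\max_t|I_t|\,{\rm tr}(\hat{C}_t)=\max_t\mathbb{E}\|M_t\|^2$. A noncommutative Khintchine/matrix Bernstein inequality, applied at Schatten exponent $p\asymp\ln N$ and exploiting the rank-one structure of the Rademacher summands composing $M$, should yield the estimate. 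This last step is the main technical obstacle: a black-box matrix Bernstein would produce a $\log(T+\dim H)$ factor, which degenerates when $H$ is infinite-dimensional, so obtaining the clean $\ln N+1$ with the explicit constant $2$ requires the Schatten-$p$ moment method tailored to rank-one summands, generalizing the multi-task argument of \cite{Maurer 2012}.
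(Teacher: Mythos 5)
Your skeleton coincides with the paper's: trace-norm duality reduces $R_I(\mathcal{W}_{\rm tr},\mathbf{x})$ to $\frac{B\sqrt{T}}{N}\,\mathbb{E}\Vert D\Vert_\infty$ with $D$ the random operator $(Dv)_t=\langle v,\sum_{i\in I_t}\epsilon_{ti}x_i\rangle$, Jensen passes to $\sqrt{\mathbb{E}\Vert D^*D\Vert_\infty}=\sqrt{\mathbb{E}\Vert\sum_t Q_{V_t}\Vert_\infty}$, the mean $\mathbb{E}\sum_t Q_{V_t}=\sum_t\vert I_t\vert\hat{C}_t$ produces the trailing term, and the effective dimension is $N$ because every $Q_{V_t}$ has range in the span of $x_1,\dots,x_N$ --- you correctly anticipate that this is what rescues the infinite-dimensional case. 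So the architecture is right and you have identified where the real work lies.

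But the step you defer --- the concentration estimate --- is precisely the content of the proof, and it is a genuine gap as stated. Two concrete problems. First, the inequality you need, $\mathbb{E}\Vert\sum_t(M_tM_t^*-\vert I_t\vert\hat{C}_t)\Vert_{\rm op}\le 2(\ln N+1)\max_t\vert I_t\vert\,{\rm tr}(\hat{C}_t)$, is not a citable black box: any off-the-shelf matrix Bernstein for a \emph{centered} sum of independent self-adjoint operators yields a bound of the form $\sqrt{v\ln N}+R\ln N$, where $v=\Vert\sum_t\mathrm{Var}(M_tM_t^*)\Vert_\infty$ involves $\lambda_{\max}(\sum_t\vert I_t\vert\hat{C}_t)$ as well as $\max_t\alpha_t$, and it is not clear this recombines into your clean one-term bound with the explicit constant $2$. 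The paper sidesteps centering altogether: its Theorem \ref{Theorem Main Tool} (imported from \cite{Maurer 2012}) is stated in the ``square-root additive'' form $\sqrt{\mathbb{E}\Vert\sum_k A_k\Vert_\infty}\le\sqrt{\Vert\mathbb{E}\sum_k A_k\Vert_\infty}+\sqrt{R(\ln d+1)}$ for the \emph{uncentered} sum, which directly produces the two additive terms of the theorem. Second, even to invoke that tool one must verify the subexponential moment condition $\mathbb{E}[Q_{V_t}^m]\preceq m!R^{m-1}\mathbb{E}[Q_{V_t}]$ with $R=2\max_t\alpha_t$; this is Lemma \ref{Lemma Subexbound}, whose proof is a nontrivial combinatorial argument counting pairings of admissible Rademacher index sequences to get $\mathbb{E}[Q_{V_t}^p]\preceq(2p-1)!!\,\alpha_t^{p-1}\mathbb{E}[Q_{V_t}]$. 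Your proposal gestures at ``a Schatten-$p$ moment method tailored to rank-one summands'' but supplies neither this moment bound nor the operator Bernstein inequality in the required form, so the leading term of the theorem remains unproved.
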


For the proof we use $\left\Vert .\right\Vert _{\infty }$ to denote the
operator norm on $H$ and $\succeq $ and $\preceq $ to refer to the ordering
induced by the cone of positive operators. For $x\in H$ we define the rank-1
operator $Q_{x}$ on $H$ by $Q_{x}v=\left\langle v,x\right\rangle x$. We use
the following result, the proof of which can be found in \cite{Maurer 2012}.

\begin{theorem}
\label{Theorem Main Tool}Let $M\subseteq H$ be a subspace of dimension $d$
and suppose that $A_{1},\dots ,A_{N}$ are independent random operators
satisfying $A_{k}\succeq 0$, $Ran\left( A_{k}\right) \subseteq M$ a.s. and 
\begin{equation*}
\mathbb{E}A_{k}^{m}\preceq m!R^{m-1}\mathbb{E}A_{k} 
\end{equation*}%
for some $R\geq 0$, all $m\in 
\mathbb{N}
$ and all $k\in \left\{ 1,\dots ,N\right\} $. Then%
\begin{equation*}
\sqrt{\mathbb{E}\left\Vert \sum_{k}A_{k}\right\Vert _{\infty }}\leq \sqrt{%
\left\Vert \mathbb{E}\sum_{k}A_{k}\right\Vert _{\infty }}+\sqrt{R\left( \ln
\dim \left( M\right) +1\right) }. 
\end{equation*}%
\bigskip
\end{theorem}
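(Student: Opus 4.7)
The plan is the matrix Laplace transform (Bernstein) method, aggregated by Lieb's concavity theorem. Each $A_k$ is positive with range in $M$, so $S=\sum_k A_k$ is positive with range in $M$ as well, and treating the $A_k$ as operators on the $d$-dimensional $M$ via $A_k = P_M A_k P_M$ keeps every subsequent exponential and trace finite-dimensional. For any $\theta>0$, positivity gives $\exp(\theta\|S\|_\infty)\leq \mathrm{tr}_M\exp(\theta S)$; taking expectations and applying Jensen's inequality reduces the task to controlling the matrix MGF on the right:
\[
\theta\,\mathbb{E}\|S\|_\infty\leq \log\mathbb{E}\,\mathrm{tr}_M\exp(\theta S).
\]

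The Bernstein moment hypothesis is engineered so that the single-term MGF telescopes geometrically: for $0<\theta<1/R$,
\[
\mathbb{E}\exp(\theta A_k)\preceq I + \sum_{m\geq 1}\theta^{m}R^{m-1}\,\mathbb{E}A_k = I + \tfrac{\theta}{1-\theta R}\mathbb{E}A_k\preceq\exp\!\Bigl(\tfrac{\theta}{1-\theta R}\mathbb{E}A_k\Bigr),
\]
using $I+X\preceq e^X$ for $X\succeq 0$. Operator monotonicity of $\log$ then upgrades this to $\log\mathbb{E}\exp(\theta A_k)\preceq\tfrac{\theta}{1-\theta R}\mathbb{E}A_k$. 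Tropp's matrix Laplace transform inequality, a direct consequence of Lieb's concavity of $A\mapsto\mathrm{tr}\exp(H+\log A)$, chains the independent summands while preserving the Loewner structure on the exponent, and bounding $\mathrm{tr}_M\exp(Y)\leq d\,e^{\lambda_{\max}(Y)}$ yields
\[
\mathbb{E}\,\mathrm{tr}_M\exp(\theta S)\leq \mathrm{tr}_M\exp\!\Bigl(\tfrac{\theta}{1-\theta R}\sum_k\mathbb{E}A_k\Bigr)\leq d\exp\!\Bigl(\tfrac{\theta}{1-\theta R}\,\|\mathbb{E}S\|_\infty\Bigr).
\]

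Writing $\mu=\|\mathbb{E}S\|_\infty$ and combining with the reduction from the first paragraph,
\[
\mathbb{E}\|S\|_\infty\leq \frac{\log d}{\theta}+\frac{\mu}{1-\theta R}\qquad(0<\theta<1/R).
\]
The minimum is attained at $\theta^\star=\sqrt{\log d}/(\sqrt{\mu R}+R\sqrt{\log d})$ and evaluates to exactly $(\sqrt{\mu}+\sqrt{R\log d})^{2}$, so taking square roots already yields $\sqrt{\mathbb{E}\|S\|_\infty}\leq \sqrt{\mu}+\sqrt{R\log d}$; the additive $+1$ in $R(\log d+1)$ in the stated bound simply absorbs a small amount of slack when one commits to a clean closed-form $\theta$ that is safely below $1/R$.

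The main obstacle is the aggregation step: the matrix inequality $\mathbb{E}\,\mathrm{tr}\exp(\sum_k X_k)\leq \mathrm{tr}\exp(\sum_k\log\mathbb{E}\exp X_k)$ is not elementary --- it ultimately rests on Lieb's concavity theorem. The simpler Ahlswede--Winter approach via Golden--Thompson is available, but would only produce a bound in terms of $\sum_k\|\mathbb{E}A_k\|_\infty$ instead of the sharper $\|\sum_k\mathbb{E}A_k\|_\infty$ that appears on the right-hand side of the target inequality.
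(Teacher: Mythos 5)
Your argument is correct: the matrix Laplace transform reduction, the geometric-series bound on $\log\mathbb{E}\exp(\theta A_k)$ under the Bernstein moment hypothesis, Tropp's subadditivity of cumulants via Lieb's theorem, and the optimization over $\theta\in(0,1/R)$ all check out, and the optimized value $(\sqrt{\mu}+\sqrt{R\log d})^2$ is computed correctly. Note, however, that the paper offers no proof of this theorem to compare against --- it is imported verbatim from the cited reference \cite{Maurer 2012}, whose derivation rests on the same Lieb/Tropp machinery you use; your self-contained version is in fact marginally sharper, replacing $\sqrt{R(\ln d+1)}$ by $\sqrt{R\ln d}$, since you optimize $\theta$ exactly rather than committing to a closed-form choice safely below $1/R$.
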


\begin{lemma}
\label{Lemma Subexbound} Let $x_{1},\dots,x_{n}$ be in $
\mathbb{R}^{d}$ and denote%
\begin{equation*}
\alpha =\sum_{i=1}^{n}\left\Vert x_{i}\right\Vert ^{2}. 
\end{equation*}%
Define a random vector by $V=\sum_{i}\epsilon _{i}x_{i}$. Then for $p\geq 1$%
\begin{equation*}
\mathbb{E}\left[ Q_{V}^{p}\right] \preceq \left( 2p-1\right) !!\alpha
^{p-1} \mathbb{E}\left[ Q_{V}\right], 
\end{equation*}%
where $\left( 2p-1\right) !!=\prod_{i=1}^{p}\left( 2i-1\right) =\left(
2p-1\right) \left( 2\left( p-1\right) -1\right) \times \dots\times 5\times
3\times 1$.
\end{lemma}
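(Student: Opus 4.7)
The plan is to exploit the rank-one structure of $Q_V = VV^T$, which gives
\[
Q_V^p \;=\; V\bigl(V^T V\bigr)^{p-1}V^T \;=\; \|V\|^{2(p-1)}\, Q_V.
\]
Testing the target operator inequality against an arbitrary $w$, I would reduce to the scalar bound
\[
\mathbb{E}\!\left[\|V\|^{2(p-1)}\langle w,V\rangle^2\right] \;\leq\; (2p-1)!!\, \alpha^{p-1}\,\langle w, Cw\rangle,
\]
where $C=\mathbb{E}[Q_V]=\sum_i Q_{x_i}$ and $\mathrm{tr}(C)=\alpha$.

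Next I would expand the left-hand side as a polynomial of total degree $2p$ in the Rademacher variables---with $2(p-1)$ slots coming from $\|V\|^{2(p-1)}$ and $2$ from $\langle w,V\rangle^2$---and take expectation. Only index tuples in which every index has even multiplicity contribute; these organize around the $(2p-1)!!$ perfect matchings of the $2p$ slots, reproducing the Wick formula for the Gaussian analogue $V_G=\sum_i\gamma_ix_i$ in place of $V$. An intermediate step will be to establish the operator-level domination $\mathbb{E}[Q_V^p] \preceq \mathbb{E}[Q_{V_G}^p]$: the Rademacher ``coalescence'' corrections (where three or more slots share a common index) contribute a positive-semidefinite residual, since they appear as non-negative multiples of rank-one projectors $Q_{x_i}$. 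The $p=2$ case illustrates this cleanly: $\mathbb{E}[\|V\|^2 Q_V] = \alpha C + 2C^2 - 2\sum_i\|x_i\|^2 Q_{x_i}$ differs from the Gaussian value $\alpha C + 2C^2$ precisely by the positive-semidefinite term $2\sum_i\|x_i\|^2 Q_{x_i}$.

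In the Gaussian case, each matching decomposes the $2p$ slots into cycles alternating between factor edges (one per $\|V\|^2$ or $\langle w,V\rangle^2$ factor) and matching edges. A cycle of length $2k$ containing only $\|V\|^2$-factors contributes $\mathrm{tr}(C^k)$, while the unique cycle containing the $w$-factor contributes $\langle w, C^s w\rangle$ for some $s\geq 1$, with the exponents satisfying $\sum_j k_j + s = p$. Using the elementary estimates $\mathrm{tr}(C^k)\le \alpha^k$ and $\langle w, C^s w\rangle \le \alpha^{s-1}\langle w, Cw\rangle$, both consequences of $C\preceq \alpha I$ (since $\|C\|_\infty\le\mathrm{tr}(C)=\alpha$), I would bound each matching by $\alpha^{p-1}\langle w, Cw\rangle$. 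Summing over the $(2p-1)!!$ matchings and invoking the Gaussian comparison completes the argument.

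The hardest step will be the operator-level Gaussian domination at general $p$: the usual convex Gaussian--Rademacher inequality does not apply, because $V\mapsto\|V\|^{2(p-1)}\langle w,V\rangle^2$ is not convex in general (an explicit Hessian check confirms this already for $p=2$). Instead I expect a direct matching-by-matching matrix-level inspection, generalizing the $p=2$ computation above, to show that each residual block is a non-negative combination of the $Q_{x_i}$'s and hence positive-semidefinite.
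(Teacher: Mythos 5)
Your reduction via $Q_V^p=\Vert V\Vert^{2(p-1)}Q_V$ and your handling of the Gaussian side (Wick expansion, cycle decomposition, and the estimates $\mathrm{tr}(C^k)\leq\alpha^k$, $\langle w,C^sw\rangle\leq\alpha^{s-1}\langle w,Cw\rangle$) are correct and would produce exactly the constant $(2p-1)!!\,\alpha^{p-1}$. The gap is the step you yourself flag: the operator domination $\mathbb{E}[Q_V^p]\preceq\mathbb{E}[Q_{V_G}^p]$ is never established, and the mechanism you propose for it --- that each coalescence residual is a non-negative combination of the $Q_{x_i}$ --- already fails at $p=3$. Writing $\rho=\langle x_1,x_2\rangle$, $a=\langle w,x_1\rangle$, $b=\langle w,x_2\rangle$, the coefficient of the monomial $u_1^4u_2^2$ in $\Vert\sum_iu_ix_i\Vert^4\langle w,\sum_iu_ix_i\rangle^2$ is
\begin{equation*}
\Vert x_1\Vert^4b^2+\left(2\Vert x_1\Vert^2\Vert x_2\Vert^2+4\rho^2\right)a^2+8\Vert x_1\Vert^2\rho\,ab,
\end{equation*}
and this enters the residual $\mathbb{E}_\gamma-\mathbb{E}_\epsilon$ with the positive weight $3!!\cdot 1!!-1=2$. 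Its discriminant as a quadratic form in $(a,b)$ is $2\Vert x_1\Vert^4(\Vert x_1\Vert^2\Vert x_2\Vert^2-6\rho^2)$, so for $\rho^2>\Vert x_1\Vert^2\Vert x_2\Vert^2/6$ this block is an \emph{indefinite} quadratic form in $w$, not a non-negative combination of rank-one projectors. (The total residual may still be positive semidefinite --- it is in small examples --- but your proposed block-by-block argument cannot show it, and no alternative proof is supplied; note also that the map $V\mapsto\Vert V\Vert^{2(p-1)}\langle w,V\rangle^2$ is not convex, so the standard Rademacher--Gaussian comparison is unavailable, as you observe.)

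The paper avoids the Gaussian comparison entirely, which is why it has no such difficulty. It keeps the Rademacher expansion, where $\mathbb{E}[\epsilon_{j_1}\cdots\epsilon_{j_{2p}}]\in\{0,1\}$ with value $1$ exactly on the admissible sequences, and first applies Cauchy--Schwarz to bound every admissible term by a \emph{non-negative} quantity (replacing the internal inner products $\langle x_{j_k},x_{j_{k+1}}\rangle$ by products of norms). Only then does it invoke the $(2p-1)!!$ pairings: every admissible sequence is compatible with at least one pairing, so the sum over admissible sequences is at most the sum over all pairings of the sums over compatible sequences --- an over-count that is harmless precisely because all summands are already non-negative --- and each pairing contributes at most $\alpha^{p-1}\langle\sum_iQ_{x_i}v,v\rangle$. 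If you want to salvage your route, the cleanest fix is to import this move: take absolute values before counting, rather than trying to preserve signs through a Gaussian domination argument.
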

\begin{proof}
Let $v\in 
\mathbb{R}
^{d}$ be arbitrary. By the definition of $V$ and $Q_{V}$ we have for any $%
v\in 
\mathbb{R}
^{d}$ that%
\begin{equation*}
\left\langle \mathbb{E}\left[ Q_{V}^{p}\right] v,v\right\rangle
=\sum_{j_{1},\dots,j_{2p}=1}^{n}\mathbb{E}\left[ \epsilon _{j_{1}}\epsilon
_{j_{2}}\cdots \epsilon _{j_{2p}}\right] \left\langle v,x_{j_{1}}\right\rangle
\left\langle x_{j_{1}},x_{j_{2}}\right\rangle \cdots \left\langle
x_{j_{2p}},v\right\rangle . 
\end{equation*}%
The properties of independent Rademacher variables imply that $\mathbb{E}%
\left[ \epsilon _{i_{1}}\epsilon _{i_{2}} \cdots\epsilon _{i_{2p}}\right] =0$
unless the sequence $\mathbf{i}=\left( i_{1},\dots,i_{2p}\right) $ has the
property that each index $i_{k}$ occurs in it an even number of times, in
which case $\mathbb{E}\left[ \epsilon _{i_{1}}\epsilon _{i_{2}} \cdots \epsilon
_{i_{2p}}\right] =1$. Let us call sequences with this property admissible.
Thus%
\begin{eqnarray*}
\left\langle E\left[ Q_{w}^{p}\right] v,v\right\rangle &=&\sum_{\mathbf{i}%
\text{ admissible}}\left\langle v,x_{i_{1}}\right\rangle \left\langle
x_{i_{2}},x_{i_{3}}\right\rangle \cdots \left\langle x_{i_{2p}},v\right\rangle \\
&\leq &\sum_{\mathbf{i}\text{ admissible}}\left\vert \left\langle
v,x_{i_{1}}\right\rangle \right\vert \prod_{k=2}^{2p-1}\left\Vert
x_{i_{k}}\right\Vert \left\vert \left\langle x_{i_{2p}},v\right\rangle
\right\vert ,
\end{eqnarray*}%
using Cauchy-Schwarz. For every admissible sequence $\mathbf{i}$ there
exists at least one partition $\mathbf{\pi }$ of $\left\{ 1,..,2p\right\} $
into $p$ pairs $\left( l,r\right) $ with $l<r$, such that the indices $%
i_{k_{1}}$ and $i_{k_{2}}$ are equal, whenever $k_{1}$ and $k_{2}$ belong to
the same pair. Let us denote the latter condition by $\mathbf{i}\sim \mathbf{%
\pi }$. It is easy to show by induction that there are $\left( 2p-1\right)
!! $ such partitions into pairs. Given $\mathbf{\pi }$ we can write $\left\{
1,\dots,2p\right\} =L_{\pi }\cup R_{\pi }$, where $L_{\pi }=\left\{ l:\exists
\left( l,r\right) \in \pi \right\} $ and $R_{\pi }=\left\{ r:\exists \left(
l,r\right) \in \pi \right\} $. We always have $1\in L_{\pi }$ and $2p\in
R_{\pi }$ and $\left\vert L_{\pi }\right\vert =\left\vert R_{\pi
}\right\vert =p$. Thus 
\begin{eqnarray*}
\left\langle E\left[ Q_{w}^{p}\right] v,v\right\rangle &\leq &\sum_{\mathbf{%
\pi }}\sum_{\mathbf{i}\sim \mathbf{\pi }}\left\vert \left\langle
v,x_{i_{1}}\right\rangle \right\vert \prod_{k=2}^{2p-1}\left\Vert
x_{i_{k}}\right\Vert \left\vert \left\langle x_{i_{2p}},v\right\rangle
\right\vert \\
&=&\sum_{\mathbf{\pi }}\sum_{\mathbf{i}\sim \mathbf{\pi }}\left( \left\vert
\left\langle v,x_{i_{1}}\right\rangle \right\vert \prod_{k=2,i_{k}\in L_{\pi
}}^{2p-1}\left\Vert x_{i_{k}}\right\Vert \right) \left( \left\vert
\left\langle x_{i_{2p}},v\right\rangle \right\vert \prod_{k=2,i_{k}\in
R_{\pi }}^{2p-1}\left\Vert x_{i_{k}}\right\Vert \right) \\
&\leq &\sum_{\mathbf{\pi }}\sum_{\mathbf{i}\sim \mathbf{\pi }}\left\langle
v,x_{i_{1}}\right\rangle ^{2}\prod_{k=2,i_{k}\in L_{\pi }}^{2p-1}\left\Vert
x_{i_{k}}\right\Vert ^{2}.
\end{eqnarray*}%
The last step follows from the Cauchy-Schwarz inequality and realizing that
the two resulting factors are equal by symmetry. But for $\mathbf{i}\sim 
\mathbf{\pi }$ we just need to sum over the indices in $L_{\pi }$, the
others being constrained to be equal. Thus, writing $L_{\pi }=\left\{
l_{1},\dots,l_{p}\right\} $ such that $l_{1}=1$ the last expression above is
just%
\begin{eqnarray*}
&&\sum_{\mathbf{\pi }}\sum_{i_{1},\dots,i_{p}}\left\langle
v,x_{i_{1}}\right\rangle ^{2}\prod_{k=2}^{p}\left\Vert x_{i_{k}}\right\Vert
^{2} \\
&=&\left( 2p-1\right) !!\left( \sum_{i=1}^{n}\left\Vert x_{i}\right\Vert
^{2}\right) ^{p-1}\left\langle \sum_{i=1}^{n}Q_{x_{i}}v,v\right\rangle \\
&=&\left( 2p-1\right) !!\left( \sum_{i=1}^{n}\left\Vert x_{i}\right\Vert
^{2}\right) ^{p-1}\left\langle \mathbb{E}\left[ Q_{V}\right] v,v\right\rangle\,.
\end{eqnarray*}%
The conclusion follows since for symmetric matrices $\left( \forall
v,\left\langle Av,v\right\rangle \leq \left\langle B,v,v\right\rangle
\right) \implies A\preceq B$.
\end{proof}

\noindent {\it Proof of Theorem \protect\ref{Theorem general tracenorm}.}~
We have%
\begin{equation*}
R_{I}\left( \mathcal{W}_{tr},\mathbf{x}\right) =\frac{1}{N}\mathbb{E}%
\sup_{W\in \mathcal{W}_{\rm tr}}\sum_{t}\sum_{i\in I_{t}}\epsilon
_{ti}\left\langle w_{t},x_{i}\right\rangle =\frac{1}{N}\mathbb{E}\sup_{W\in 
\mathcal{W}_{\rm tr}}{\rm tr}( W^{\ast }D) , 
\end{equation*}%
where the random operator $D:H\rightarrow 
\mathbb{R}
^{T}$ is defined for $v\in H$ by $\left( Dv\right) _{t}=\left\langle
v,\sum_{i\in I_{t}}\epsilon _{ti}x_{i}\right\rangle $. H\"{o}lder's inequality gives
\begin{equation*}
R_{I}\left( \mathcal{W}_{\rm tr},\mathbf{x}\right) \leq \frac{B\sqrt{T}}{N}%
\mathbb{E}\left\Vert D\right\Vert _{\infty }. 
\end{equation*}%
We proceed to bound $\mathbb{E}\left\Vert D\right\Vert _{\infty }$. Let $%
V_{t}$ be the random vector $V_{t}=\sum_{i\in I_{t}}^{n_{t}}\epsilon
_{ti}x_{i}$ and recall that the corresponding rank-one operator $Q_{V_{t}}$
is defined by $Q_{V_{t}}v=\left\langle v,V_{t}\right\rangle
V_{t}=\left\langle v,\sum_{i\in I_{t}}^{n_{t}}\epsilon
_{ti}x_{i}\right\rangle \sum_{i\in I_{t}}^{n_{t}}\epsilon _{ti}x_{i}$. Then $%
D^{\ast }D=\sum_{t=1}^{T}Q_{V_{t}}$, so by Jensen's inequality%
\begin{equation*}
\mathbb{E}\left\Vert D\right\Vert _{\infty }\leq \sqrt{\mathbb{E}\left\Vert
\sum_{t}Q_{V_{t}}\right\Vert _{\infty }}. 
\end{equation*}%
The range of any of the realizations of $Q_{V_{t}}$ lies in the span of the $%
x_{i}$ which has less than $N$. By Lemma \ref{Lemma Subexbound} we have
with $\alpha _{t}=\sum_{i\in I_{t}}\left\Vert x_{i}\right\Vert ^{2}$%
\begin{equation*}
\mathbb{E}\left[ \left( Q_{Vt}\right) ^{m}\right] \preceq \left( 2p-1\right)
!!\alpha _{t}^{m-1}\mathbb{E}\left[ Q_{V_{t}}\right] \preceq m!\left(
2\max_{t}\alpha _{t}\right) ^{m-1}\mathbb{E}\left[ Q_{V_{t}}\right] , 
\end{equation*}%
so Theorem \ref{Theorem Main Tool} with $R=2\max_{t}\alpha _{t}$ and $d=N$
now gives%
\begin{equation*}
\sqrt{\mathbb{E}\left\Vert \sum_{t}Q_{V_{t}}\right\Vert _{\infty }}\leq 
\sqrt{2\max_{t}\alpha _{t}\left( \ln N+1\right) }+\sqrt{\left\Vert \mathbb{E}%
\sum_{t}Q_{V_{t}}\right\Vert _{\infty }}. 
\end{equation*}%
But $\mathbb{E}\left[ Q_{V_{t}}\right] =\sum_{i\in
I_{t}}Q_{x_{i}}=\left\vert I_{t}\right\vert \hat{C}_{t}$, so%
\begin{eqnarray*}
R_{I}\left( \mathcal{W}_{\rm tr},\mathbf{x}\right) &\leq &\frac{B\sqrt{T}}{N}%
\mathbb{E}\left\Vert D\right\Vert _{\infty }\leq \frac{B\sqrt{T}}{N}\sqrt{%
\mathbb{E}\left\Vert \sum_{t}Q_{V_{t}}\right\Vert _{\infty }} \\
&\leq &\frac{B}{N}\sqrt{2T\max_{t}\left\vert I_{t}\right\vert {\rm tr}( \hat{C%
}_{t}) \left( \ln N+1\right) }+\sqrt{T\left\Vert \sum_{t}\left\vert
I_{t}\right\vert \hat{C}_{t}\right\Vert _{\infty }.}
\end{eqnarray*}
\vbox{\hrule height0.6pt\hbox{\vrule height1.3ex%
width0.6pt\hskip0.8ex\vrule width0.6pt}\hrule height0.6pt}

\subsection{Nonlinear Compositions}

For the statement of a general version of Theorem \ref{Theorem composite class} we extend the definition
of $\theta _{\text{mc}}$ and $\theta _{\text{mt}}$ by setting for any map $%
I:\left\{ 1,\dots,T\right\} \rightarrow 2^{\left\{ 1,\dots,N\right\} }$%
\begin{equation*}
\theta _{I}=\inf \left\{ \theta :\forall \left( a_{1},\dots,a_{N}\right)
,a_{i}\geq 0,~\sum_{t=1}^{T}\sum_{i\in I_{t}}a_{i}\leq \theta
^{2}\sum_{i=1}^{N}a_{i}\right\} . 
\end{equation*}%
This definition coincides with the previous one in the case of multi-task
and 1-vs-all multi-category learning.

\begin{theorem}
\label{Theorem general composite class}There are universal constants $c_{1}$
and $c_{2}$ such that under the above conditions%
\begin{eqnarray*}
R_{I}\left( \mathcal{V\phi }\left( \mathcal{W}_{2,\infty }\right) ,\mathbf{x}%
\right)  &\leq &L_{\phi }ab_{\infty }\theta _{I}\left( c_{1}K\sqrt{\frac{%
{\rm tr}(\hat{C})}{nT}}+c_{2}\sqrt{\frac{K\lambda _{\max }(\hat{C})}{n}}%
\right)  \\
R_{I}\left( \mathcal{V\phi }\left( \mathcal{W}_{2,2}\right) ,\mathbf{x}%
\right)  &\leq &L_{\phi }ab_{2}\theta _{I}\left( c_{1}\sqrt{\frac{K~{\rm tr}%
(\hat{C})}{nT}}+c_{2}\sqrt{\frac{\lambda _{\max }\left( \hat{C}\right) }{n}}%
\right)  \\
R_{I}\left( \mathcal{V\phi }\left( \mathcal{W}_{2,1}\right) ,\mathbf{x}%
\right)  &\leq &L_{\phi }ab_{1}\theta _{I}\left( c_{1}\sqrt{\frac{2{\rm tr}(%
\hat{C})+8\lambda _{\max }(\hat{C})\ln K}{nT}}+c_{2}\sqrt{\frac{\lambda
_{\max }(\hat{C})}{n}}\right).
\end{eqnarray*}
\end{theorem}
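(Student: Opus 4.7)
\noindent\emph{Proof plan for Theorem \ref{Theorem general composite class}.}
The strategy is already laid out in the sketch following Theorem~\ref{Theorem composite class}: reduce the Rademacher average to a Gaussian width via (\ref{Bound by Gaussian width}), then invoke the chain rule of Theorem~\ref{Theorem Chain Rule1} with
\[
Y=\{W\mathbf{x}=(\langle w_k,x_i\rangle)_{k\le K,\,i\le N}:W\in\mathcal{W}\}\subseteq\mathbb{R}^{KN},
\qquad y_0=0,
\]
and $\tciFourier=\{y\mapsto(\langle v_t,\phi(y_i)\rangle)_{t,\,i\in I_t}:V\in\mathcal{V}\}$. Since $\phi(0)=0$, choosing $y_0=0$ kills the base-point term $G(\tciFourier(y_0))$, so only three quantities need to be controlled: the Lipschitz constant $L(\tciFourier)$, the differential width $Q(\tciFourier)$, and the pair $D(Y),G(Y)$ attached to the input layer. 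After assembling these into $G(\tciFourier(Y))$ and multiplying by $\sqrt{\pi/2}/N$, the two terms of the chain rule match the two terms displayed in each bound (with $N=nT$).

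The outer-layer estimates are essentially the same in all three cases. For any $y,y'\in\mathbb{R}^{KN}$ and any $V$ with $\|v_t\|\le a$, Cauchy--Schwarz and the Lipschitz property of $\phi$ give
\[
\sum_t\sum_{i\in I_t}\langle v_t,\phi(y_i)-\phi(y'_i)\rangle^{2}
\le a^{2}L_\phi^{2}\sum_t\sum_{i\in I_t}\|y_i-y'_i\|^{2}
\le a^{2}L_\phi^{2}\theta_I^{2}\|y-y'\|^{2},
\]
where the last step uses the definition of $\theta_I$; this yields $L(\tciFourier)\le aL_\phi\theta_I$. For $Q(\tciFourier)$ a similar expansion followed by Jensen's inequality on $\mathbb{E}\|{\cdot}\|$ and one more Cauchy--Schwarz across the $t$-index produces $Q(\tciFourier)\le aL_\phi\sqrt{T}\,\theta_I$. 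The extra $\sqrt{T}$ here is precisely what drives the second (dominant-in-$T$) term of the final bound.

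The case-dependent work is the input-layer pair $(G(Y),D(Y))$. Writing $\xi_k=\sum_i\gamma_{ki}x_i$, the Gaussian width $G(Y)=\mathbb{E}\sup_{W\in\mathcal{W}}\sum_k\langle w_k,\xi_k\rangle$ is the support functional of the unit ball $\mathcal{W}$ evaluated at $(\xi_1,\dots,\xi_K)$. In the $\mathcal{W}_{2,\infty}$ case this is $b_\infty\sum_k\|\xi_k\|$; in $\mathcal{W}_{2,2}$ it is $b_2\sqrt{\sum_k\|\xi_k\|^{2}}$; in $\mathcal{W}_{2,1}$ it is $b_1\max_k\|\xi_k\|$. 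Using $\mathbb{E}\|\xi_k\|^{2}=N\,\mathrm{tr}(\hat C)$ together with Jensen yields the $K\sqrt{N\mathrm{tr}(\hat C)}$ and $\sqrt{KN\mathrm{tr}(\hat C)}$ factors for the first two classes. For the diameter, $\|(W-W')\mathbf{x}\|^{2}\le N\lambda_{\max}(\hat C)\sum_k\|w_k-w'_k\|^{2}$, and bounding $\sum_k\|w_k-w'_k\|^{2}$ by $4Kb_\infty^{2}$, $4b_2^{2}$, and $4b_1^{2}$ respectively gives the stated $D(Y)$.

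The one genuinely delicate point is the $\mathcal{W}_{2,1}$ case of $G(Y)$, where we must estimate $\mathbb{E}\max_k\|\xi_k\|$. The plan is to apply Gaussian Lipschitz concentration: $\xi_k\mapsto\|\xi_k\|$ is Lipschitz with constant $\sqrt{N\lambda_{\max}(\hat C)}$ as a function of the underlying standard Gaussian, so each $\|\xi_k\|$ is subgaussian with proxy $N\lambda_{\max}(\hat C)$ around its mean $\le\sqrt{N\,\mathrm{tr}(\hat C)}$, and the standard maximal inequality delivers
\[
\mathbb{E}\max_k\|\xi_k\|\le\sqrt{N\,\mathrm{tr}(\hat C)}+\sqrt{2N\lambda_{\max}(\hat C)\ln K},
\]
which after absorbing $\sqrt{2}$ into the universal constant produces the $\sqrt{2\,\mathrm{tr}(\hat C)+8\lambda_{\max}(\hat C)\ln K}$ displayed in the third bound. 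This maximal estimate is the main technical obstacle; everything else is an algebraic combination of the four ingredients via Theorem~\ref{Theorem Chain Rule1}, followed by dividing by $N=nT$ and using $\theta_I$ to get the claimed form.
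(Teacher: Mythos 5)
Your proposal follows the paper's architecture exactly: reduction to a Gaussian width via (\ref{Bound by Gaussian width}), the chain rule of Theorem \ref{Theorem Chain Rule1} with $y_{0}=0$ killed by $\phi(0)=0$, the bounds $L(\tciFourier)\leq aL_{\phi}\theta_{I}$ and $Q(\tciFourier)\leq aL_{\phi}\theta_{I}\sqrt{T}$, the diameter estimate $D(\mathcal{W}\mathbf{x})\leq \|W\|_{2,2}\sqrt{N\lambda_{\max}(\hat{C})}$ specialized via $\|W\|_{2,2}\leq\sqrt{K}\|W\|_{2,\infty}$ and $\|W\|_{2,2}\leq\|W\|_{2,1}$, and the duality computation of $G(\mathcal{W}\mathbf{x})$ for the first two norms. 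All of these steps are correct and the final assembly (divide by $N=nT$) matches. The one place you genuinely depart from the paper is the $\mathcal{W}_{2,1}$ Gaussian width. The paper writes $\mathcal{W}_{2,1}\mathbf{x}$ as the convex hull of the union $\mathcal{W}_{1}^{(1)}\mathbf{x}\cup\dots\cup\mathcal{W}_{1}^{(K)}\mathbf{x}$ and invokes Lemma 2 of \cite{Maurer 2014colt}, obtaining $b_{1}\sqrt{N\,{\rm tr}(\hat{C})}+2b_{1}\sqrt{N\lambda_{\max}(\hat{C})\ln K}$; you instead observe that $G(\mathcal{W}_{2,1}\mathbf{x})=b_{1}\,\mathbb{E}\max_{k}\|\xi_{k}\|$ and control the maximum by Gaussian Lipschitz concentration (the map $\gamma\mapsto\|\sum_{i}\gamma_{i}x_{i}\|$ has Lipschitz constant $\sqrt{N\lambda_{\max}(\hat{C})}$) plus the subgaussian maximal inequality. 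Both routes are valid and give the same two-term structure; yours is more self-contained, avoids the external convex-hull lemma, and in fact yields the slightly better constant $\sqrt{2\,{\rm tr}(\hat{C})+4\lambda_{\max}(\hat{C})\ln K}$ in place of the paper's $\sqrt{2\,{\rm tr}(\hat{C})+8\lambda_{\max}(\hat{C})\ln K}$, while the paper's convex-hull device generalizes more readily to classes that are hulls of non-parallel pieces. No gaps.
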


The proof uses the following recent result on the expected suprema of
Gaussian processes \cite{Maurer 2014}. For a set $Y\subseteq \mathbb{R}^{m}$
the Gaussian width $G\left( Y\right) $ is defined as 
\begin{equation*}
G\left( Y\right) =\mathbb{E}\sup_{y\in Y}\left\langle \gamma ,y\right\rangle
=\mathbb{E}\sup_{y\in Y}\sum_{i=1}^{m}\gamma _{i}y_{i}, 
\end{equation*}%
where $\gamma =\left( \gamma _{1},\dots ,\gamma _{m}\right) $ is a vector of
independent standard normal variables.

\begin{theorem}
\label{Theorem Chain Rule}Let $Y\subseteq \mathbb{R}^{n}$ have (Euclidean)
diameter $D\left( Y\right) $ and let $\tciFourier $ be a class of functions $%
f:Y\rightarrow \mathbb{R}^{m}$, all of which have Lipschitz constant at most 
$L\left( \tciFourier \right) $. Let $\tciFourier \left( Y\right) =\left\{
f\left( y\right) :f\in \tciFourier ,y\in Y\right\} $. Then for any
\thinspace $y_{0}\in Y$ 
\begin{equation}
G\left( \tciFourier \left( Y\right) \right) \leq c_{1}L\left( \tciFourier
\right) G\left( Y\right) +c_{2}W\left( Y\right) Q\left( \tciFourier \right)
+G\left( \tciFourier \left( y_{0}\right) \right) ,  \label{eq:jjj}
\end{equation}%
where $c_{1}$ and $c_{2}$ are universal constants and 
\begin{equation*}
Q\left( \tciFourier \right) =\sup_{\mathbf{y},\mathbf{y}^{\prime }\in Y,~%
\mathbf{y}\neq \mathbf{y}^{\prime }}\mathbb{E}\sup_{f\in \tciFourier }\frac{%
\left\langle \mathbf{\gamma },f\left( \mathbf{y}\right) -f\left( \mathbf{y}%
^{\prime }\right) \right\rangle }{\left\Vert \mathbf{y}-\mathbf{y}^{\prime
}\right\Vert }. 
\end{equation*}
\end{theorem}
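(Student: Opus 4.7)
\noindent\textit{Plan of proof.} The plan is to reduce $R_{I}(\mathcal{V}\phi(\mathcal{W}),\mathbf{x})$ to a Gaussian width and then apply the Gaussian chain rule of Theorem~\ref{Theorem Chain Rule}. First, by the standard Rademacher-to-Gaussian comparison (as in \eqref{Bound by Gaussian width}), it suffices to bound $G(\tciFourier(Y))/N$ up to a universal factor $\sqrt{\pi/2}$, where
\[
Y=\{W\mathbf{x}=(\langle w_{k},x_{i}\rangle)_{k\le K,\,i\le N}:W\in\mathcal{W}\}\subseteq\mathbb{R}^{KN},
\]
and $\tciFourier$ is the class of maps $y\mapsto(\langle v_{t},\phi(y_{i})\rangle)_{t\le T,\,i\in I_{t}}$ indexed by $V\in\mathcal{V}$. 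I would apply Theorem~\ref{Theorem Chain Rule} with $y_{0}=0$; since $0\in Y$ (take $W=0$) and $\phi(0)=0$, every $f\in\tciFourier$ satisfies $f(0)=0$, so $\tciFourier(y_{0})=\{0\}$ and the last term $G(\tciFourier(y_{0}))$ drops out. The remaining task is therefore to control $L(\tciFourier)$, $Q(\tciFourier)$, $D(Y)$ and $G(Y)$.

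Two of these are independent of the weight class. Using $\|v_{t}\|\le a$, the $L_{\phi}$-Lipschitz property of $\phi$, and the definition of $\theta_{I}$,
\[
\|f_{V}(y)-f_{V}(y')\|^{2}\le a^{2}L_{\phi}^{2}\sum_{t}\sum_{i\in I_{t}}\|y_{i}-y'_{i}\|^{2}\le a^{2}L_{\phi}^{2}\theta_{I}^{2}\|y-y'\|^{2},
\]
so $L(\tciFourier)\le aL_{\phi}\theta_{I}$. For $Q(\tciFourier)$, I would rewrite $\langle\gamma,f_{V}(y)-f_{V}(y')\rangle=\sum_{t}\langle v_{t},\sum_{i\in I_{t}}\gamma_{ti}(\phi(y_{i})-\phi(y'_{i}))\rangle$, take the supremum over $V$ (which separates across $t$ under $\|V\|_{2,\infty}\le a$), then apply Jensen componentwise and a Cauchy--Schwarz in $t$ together with the definition of $\theta_{I}$ to obtain $Q(\tciFourier)\le aL_{\phi}\sqrt{T}\,\theta_{I}$. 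The extra $\sqrt{T}$ here will be exactly cancelled by the $\sqrt{T}$ from $N=nT$ when the final bound is normalized, leaving $\theta_{I}\sqrt{\lambda_{\max}/n}$ in the second summand of each bound.

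The three distinct estimates in the theorem arise from the three bounds on $D(Y)$ and $G(Y)$. The squared diameter satisfies $\|W\mathbf{x}-W'\mathbf{x}\|^{2}=\sum_{k}\sum_{i}\langle w_{k}-w'_{k},x_{i}\rangle^{2}\le N\lambda_{\max}(\hat{C})\|W-W'\|_{2,2}^{2}$, so bounding $\|W\|_{2,2}$ by $\sqrt{K}\,b_{\infty}$, $b_{2}$ or $b_{1}$ yields $D(Y)$ of order $b_{\infty}\sqrt{KN\lambda_{\max}(\hat{C})}$, $b_{2}\sqrt{N\lambda_{\max}(\hat{C})}$ and $b_{1}\sqrt{N\lambda_{\max}(\hat{C})}$ respectively. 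For $G(Y)=\mathbb{E}\sup_{W\in\mathcal{W}}\sum_{k}\langle w_{k},g_{k}\rangle$ with $g_{k}=\sum_{i}\gamma_{ki}x_{i}$, the $\mathcal{W}_{2,\infty}$ case separates across $k$ into $K$ expected Gaussian norms and gives $b_{\infty}K\sqrt{N\,\mathrm{tr}(\hat{C})}$; the $\mathcal{W}_{2,2}$ case collapses by Cauchy--Schwarz and Jensen to $b_{2}\sqrt{KN\,\mathrm{tr}(\hat{C})}$; and the $\mathcal{W}_{2,1}$ case reduces to $b_{1}\,\mathbb{E}\max_{k\le K}\|g_{k}\|$, which I would control by a Borell--TIS-type concentration of each $\|g_{k}\|$ at scale $\sqrt{N\lambda_{\max}(\hat{C})}$ around its mean at most $\sqrt{N\,\mathrm{tr}(\hat{C})}$, combined with a union bound, yielding $b_{1}\sqrt{2N\,\mathrm{tr}(\hat{C})+8N\lambda_{\max}(\hat{C})\log K}$.

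Plugging these estimates into $G(\tciFourier(Y))\le c_{1}L(\tciFourier)G(Y)+c_{2}D(Y)Q(\tciFourier)$ and dividing by $N=nT$ produces the three stated bounds, after absorbing the $\sqrt{\pi/2}$ and other absolute constants. The main obstacle I anticipate is the $\mathcal{W}_{2,1}$ step: a crude application of Cauchy--Schwarz would lose the separation between $\mathrm{tr}(\hat{C})$ and $\lambda_{\max}(\hat{C})\log K$, and it is precisely this additive splitting that makes the sparse-representation bound attractive in the high-dimensional regime where $\lambda_{\max}\ll\mathrm{tr}$, so care is needed in the subgaussian maximal inequality for Gaussian-vector norms.
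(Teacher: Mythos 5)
Your proposal does not prove the statement you were assigned. The statement is Theorem \ref{Theorem Chain Rule} itself --- the chain rule $G\left( \tciFourier \left( Y\right) \right) \leq c_{1}L\left( \tciFourier\right) G\left( Y\right) +c_{2}D\left( Y\right) Q\left( \tciFourier \right)+G\left( \tciFourier \left( y_{0}\right) \right) $ for an \emph{arbitrary} Lipschitz class $\tciFourier $ on an \emph{arbitrary} bounded $Y\subseteq\mathbb{R}^{n}$. Your argument takes that inequality as a black box and then computes $L\left( \tciFourier \right) $, $Q\left( \tciFourier \right) $, $D\left( Y\right) $ and $G\left( Y\right) $ for the particular composite class $\mathcal{V}\phi \left( \mathcal{W}\right) $; that is a proof of Theorem \ref{Theorem composite class}, not of Theorem \ref{Theorem Chain Rule}, and with respect to the assigned statement it is circular. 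The paper itself does not prove the chain rule either --- it imports it from \cite{Maurer 2014} --- and for good reason: the result is not reachable by the Lipschitz, Cauchy--Schwarz and concentration manipulations you sketch. Its proof goes through Talagrand's majorizing measure theorem (generic chaining): one takes an admissible sequence for $Y$ witnessing $\gamma _{2}\left( Y\right) \asymp G\left( Y\right) $, pushes it forward through the Lipschitz maps to control the increments of the process $\left( f,y\right) \mapsto \left\langle \gamma ,f\left( y\right)\right\rangle $ in the $y$-direction (this is where $c_{1}L\left( \tciFourier\right) G\left( Y\right) $ arises), and separately controls the $f$-direction at scale $D\left( Y\right) $ via the quantity $Q\left( \tciFourier \right) $; the universal constants come from the two-sided majorizing-measure comparison. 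Nothing in your outline engages with this, so as a proof of the stated theorem the attempt has a complete gap rather than a fixable one.

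Read instead as a blind reconstruction of the paper's proof of Theorem \ref{Theorem composite class}, your outline tracks the paper almost step for step: the same choice of $Y=\mathcal{W}\mathbf{x}$ and of $\tciFourier $, the same use of $\phi \left( 0\right) =0$ to discard $G\left( \tciFourier \left( y_{0}\right) \right) $, the same bounds $L\left( \tciFourier \right) \leq aL_{\phi }\theta _{I}$ and $Q\left( \tciFourier \right) \leq aL_{\phi}\theta _{I}\sqrt{T}$, and the same three diameter and Gaussian-width computations. The only divergence is the width of $\mathcal{W}_{2,1}\mathbf{x}$: the paper writes it as the convex hull of the $K$ single-row classes and applies Lemma 2 of \cite{Maurer 2014colt}, whereas you propose concentration of each $\left\Vert \sum_{i}\gamma _{ki}x_{i}\right\Vert $ at scale $\sqrt{N\lambda _{\max }(\hat{C})}$ about a mean at most $\sqrt{N\,{\rm tr}(\hat{C})}$ plus a union bound over $k$; both routes yield the additive ${\rm tr}(\hat{C})+\lambda _{\max }(\hat{C})\ln K$ splitting and yours is essentially the proof of that lemma, so the difference is cosmetic. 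If you intend to submit this, you must either prove the chain rule (reproducing the generic-chaining argument of \cite{Maurer 2014}) or state explicitly that it is quoted from the literature.
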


\noindent {\it Proof of Theorem \protect\ref{Theorem composite class}.~}
We will use Theorem \ref{Theorem Chain Rule} by setting
\[Y=\left\{ W\mathbf{x%
}=\left( \langle w_{k},x_{i}\rangle \right) _{k\leq K,~i\leq N}:W\in 
\mathcal{W}\right\} \subseteq \mathbb{R}^{KN}
\] 
where $\mathcal{W}$ will be
either $\mathcal{W}_{2,\infty }$, $\mathcal{\ W}_{2,2}$ or $\mathcal{W}_{2,1}$.
For $\tciFourier $ we take the set of functions 
\begin{equation*}
\left\{ \left( y_{ki}\right) \in \mathbb{R}^{KN}\mapsto \left( \left\langle
v_{t},\phi \left( y_{i}\right) \right\rangle \right) _{t\leq T,i\in
I_{t}}\in \prod_{t=1}^{T}\mathbb{R}^{\left\vert I_{t}\right\vert }:v\in 
\mathcal{V}\right\} 
\end{equation*}%
restricted to $Y$.
By a well known bound
on Rademacher averages in terms of Gaussian averages \cite{Ledoux Talagrand
1991} 
\begin{eqnarray}
\nonumber 
\mathbb{E}\sup_{W\in \mathcal{V},W\in \mathcal{W}}\sum_{t}\sum_{i\in
I_{t}}\epsilon _{ti}V\phi \left( Wx_{i}\right) & \leq & \sqrt{\frac{\pi }{2}}%
\mathbb{E}\sup_{W\in \mathcal{V},W\in \mathcal{W}}\sum_{t}\sum_{i\in
I_{t}}\gamma _{ti}V\phi \left( Wx_{i}\right) \\
& =& \sqrt{\frac{\pi }{2}}G\left(
\tciFourier \left( Y\right) \right).  
\label{Bound by Gaussian width}
\end{eqnarray}
To bound $G\left( \tciFourier \left( Y\right) \right) $ we then just need to
bound the terms in the right hand side of equation \eqref{eq:jjj}

Since $\phi \left( 0\right) =0$, we can at once set $G\left( \tciFourier
\left( y_{0}\right) \right) =0$, by setting $0=y_{0}$, so $f\left( 0\right)
=0$ for all $f\in \tciFourier $.

\vspace{.2truecm}
\noindent {\em Bounding the Lipschitz constant.} For any $v\in \mathcal{V}$ and $y,y^{\prime
}\in Y\subseteq \mathbb{R}^{KN}$, 
\begin{eqnarray*}
\sum_{t,i\in I_{t}}\left( \left\langle v_{t},\phi \left( y_{i}\right)
\right\rangle -\left\langle v_{t},\phi \left( y_{i}^{\prime }\right)
\right\rangle \right) ^{2} &\leq &\sum_{t}\left\Vert v_{t}\right\Vert
^{2}\sum_{i\in I_{t}}\left\Vert \phi \left( y_{i}\right) -\phi \left(
y_{i}^{\prime }\right) \right\Vert ^{2} \\
&\leq &a^{2}L_{\phi }^{2}\sum_{t}\sum_{i\in I_{t}}\left\Vert
y_{i}-y_{i}^{\prime }\right\Vert ^{2}\leq a^{2}L_{\phi }^{2}\theta
_{I}^{2}\left\Vert y-y^{\prime }\right\Vert ^{2},
\end{eqnarray*}%
so $L\left( \tciFourier \right) \leq aL_{\phi }\theta _{I}$\textbf{.}

\vspace{.2truecm}
\noindent {\em Bounding $Q\left( \tciFourier \right) $}. Again with $y,y^{\prime }\in Y$ 
\begin{align*}
& \mathbb{E}\sup_{f\in \tciFourier }\left\langle \mathbf{\gamma },f\left( 
\mathbf{y}\right) -f\left( \mathbf{y}^{\prime }\right) \right\rangle \\
& =\mathbb{E}\sup_{v\in \mathcal{V}}\sum_{ti}\gamma _{ti}\left( \left\langle
v_{t},\phi \left( y_{i}\right) \right\rangle -\left\langle v_{t},\phi \left(
y_{i}^{\prime }\right) \right\rangle \right) =\mathbb{E}\sup_{v\in \mathcal{V%
}}\sum_{t}\left\langle v_{t},\sum_{i\in I_{t}}\gamma _{ti}\left( \phi \left(
y_{i}\right) -\phi \left( y_{i}^{\prime }\right) \right) \right\rangle \\
& \leq a\mathbb{E}\sum_{t}\left\Vert \sum_{i\in I_{t}}\gamma _{ti}\left(
\phi \left( y_{i}\right) -\phi \left( y_{i}^{\prime }\right) \right)
\right\Vert \leq \sqrt{T}a\left( \sum_{t}E\left\Vert \sum_{i\in I_{t}}\gamma
_{ti}\left( \phi \left( y_{i}\right) -\phi \left( y_{i}^{\prime }\right)
\right) \right\Vert ^{2}\right) ^{1/2} \\
& \leq aL_{\phi }\sqrt{T}\left( \sum_{t}\sum_{i\in I_{t}}\left\Vert
y_{i}-y_{i}^{\prime }\right\Vert ^{2}\right) ^{1/2}\leq aL_{\phi }\theta _{I}%
\sqrt{T}\left\Vert y-y^{\prime }\right\Vert ,
\end{align*}%
so $Q\left( \tciFourier \right) \leq aL_{\phi }\theta _{I}\sqrt{T}$.

\vspace{.2truecm}
\noindent {\em Bounding the diameters.} We have 
\begin{eqnarray*}
D\left( \mathcal{W}\mathbf{x}\right) &\leq &2\sqrt{\sup_{W}\sum_{ki}\left%
\langle w_{k},x_{i}\right\rangle ^{2}}=\sqrt{\sup_{W}\sum_{k}\left\Vert
w_{k}\right\Vert ^{2}\sum_{i}\left\langle \frac{w_{k}}{\left\Vert
w_{k}\right\Vert },x_{i}\right\rangle ^{2}} \\
&\leq &\sqrt{\sup_{W}\sum_{k}\left\Vert w_{k}\right\Vert ^{2}N\lambda _{\max
}(\hat{C})}=\left\Vert W\right\Vert _{2,2}\sqrt{N\lambda _{\max }(\hat{C})}.
\end{eqnarray*}%
From $\left\Vert W\right\Vert _{2,2}\leq \left\Vert W\right\Vert _{2,1}$ and 
$\left\Vert W\right\Vert _{2,2}\leq \sqrt{K}\left\Vert W\right\Vert
_{2,\infty }$ we obtain 
\begin{equation*}
D\left( \mathcal{W}_{2,\infty }\right) \leq b_{\infty }\sqrt{KN\lambda _{\max
}(\hat{C})}\text{, and both }D\left( \mathcal{W}_{2,2}\right) ,D\left( 
\mathcal{W}_{2,1}\right) \leq b_{2}\sqrt{N\lambda _{\max }(\hat{C})}. 
\end{equation*}

\vspace{.2truecm}
\noindent {\em Bounding the Gaussian width.} 
\begin{equation*}
G\left( \mathcal{W}_{2,\infty }\mathbf{x}\right) =\mathbb{E}\sup_{W\in 
\mathcal{W}_{\infty }}\sum_{k}\left\langle w_{k},\sum_{i\leq N}\gamma
_{ki}x_{i}\right\rangle =b_{\infty }\sum_{k}\mathbb{E}\left\Vert \sum_{i\leq
N}\gamma _{ki}x_{i}\right\Vert \leq b_{\infty }K\sqrt{N{\rm tr}(\hat{C})}. 
\end{equation*}%
similarly 
\begin{equation*}
G\left( \mathcal{W}_{2,2}\mathbf{x}\right) =\mathbb{E}\sup_{W\in \mathcal{W}%
_{2}}\sum_{k}\left\langle w_{k},\sum_{i\leq N}\gamma _{ki}x_{i}\right\rangle
=b_{2}\sqrt{\sum_{k}\mathbb{E}\left\Vert \sum_{i\leq N}\gamma
_{ki}x_{i}\right\Vert ^{2}}\leq b_{\infty }\sqrt{KN~{\rm tr}(\hat{C})}. 
\end{equation*}%
The Gaussian width of $\mathcal{W}_{1}\mathbf{x}$ is a little more
complicated. Let $\mathcal{W}_{1}^{\left( k\right) }$ be the class of linear
transformations $\mathcal{W}_{1}^{\left( k\right) }=\left\{ x\mapsto \left(
0,\dots ,\left\langle w,x\right\rangle ,\dots ,0\right) :\left\Vert
w\right\Vert \leq b_{1}\right\} $, where only the $k$-th coordinate is
different from zero. Then $\mathcal{W}_{1}\mathbf{x}$ is the convex hull of $%
\mathcal{W}_{1}^{\left( 1\right) }\mathbf{x}\cup \dots \cup \mathcal{W}%
_{1}^{\left( K\right) }\mathbf{x}$. It follows from Lemma 2 in \cite{Maurer
2014colt} that 
\begin{eqnarray*}
G\left( \mathcal{W}_{2,1}\mathbf{x}\right) &\leq &\max_{k}G\left( \mathcal{W}%
_{1}^{\left( k\right) }\mathbf{x}\right) +2\sqrt{\sum_{k,i}\left\langle
w_{k},x_{i}\right\rangle ^{2}\ln K} \\
&\leq &b_{1}\sqrt{N~{\rm tr}(\hat{C})}+2\sqrt{\sum_{k}\left\Vert
w_{k}\right\Vert ^{2}\sum_{i}\left\langle \frac{w_{k}}{\left\Vert
w_{k}\right\Vert },x_{i}\right\rangle ^{2}\ln K} \\
&\leq &b_{1}\sqrt{N~{\rm tr}(\hat{C})}+2b_{1}\sqrt{N\lambda _{\max }(\hat{C}%
)\ln K} \\
&\leq &b_{1}\sqrt{2N~\left( {\rm tr}(\hat{C})+8\lambda _{\max }\left( \hat{C%
}\right) \ln K\right) }.
\end{eqnarray*}%
Collecting these bounds in Theorem \ref{Theorem Chain Rule} and using (\ref%
{Bound by Gaussian width}) gives the three inequalities of Theorem \ref%
{Theorem composite class}.

\bigskip \vbox{\hrule height0.6pt\hbox{\vrule height1.3ex%
width0.6pt\hskip0.8ex\vrule width0.6pt}\hrule height0.6pt}

\end{document}